\date{}
\title{Monotonicity and Noise-Tolerance in Case-Based Reasoning \\ with Abstract Argumentation (with Appendix)
} 
\newtheorem{theorem}{Theorem}
\newtheorem{lemma}[theorem]{Lemma}
\theoremstyle{definition}
\newtheorem{definition}[theorem]{Definition}
\newtheorem{example}[theorem]{Example}
\newcommand{\defemph}[1]{\emph{#1}}
\newcommand{\gppnew}[1]{#1}
\newcommand{\gppold}[1]{\textcolor{olive}{\sout{#1}}}
\renewcommand{\gppold}[1]{}
\newif\ifincludeincoherent
\newif\ifincludeproofs
\newif\ifincludeNN              \includeNNfalse
\newcommand{\citet}[2][]{\ifx&#1&\citeauthor{#2}~\shortcite{#2}\else \citeauthor{#2}~\shortcite[#1]{#2}\fi }
\newcommand{\citep}{\cite}
\newcommand{\citealp}[1]{\citeauthor{#1}~\citeyear{#1}}
\newcommand\powerset[1]{\ensuremath{2^{#1}}}
\def\wrt{w.r.t.}
\renewcommand{\phi}{\varphi} \newcommand{\card}[1]{\vert #1 \vert}
\renewcommand{\emptyset}{\varnothing}
\newcommand{\pgeq}{\succeq} \newcommand{\pg}{\succ}
\newcommand{\pleq}{\preceq}
\newcommand{\pl}{\prec}
\newcommand{\case}[2]{\mbox{\ensuremath{(#1, #2)}}}
\newcommand{\caseset}[2][]{\case{\{#1\}}{#2}}
\newcommand{\set}[1]{\mbox{\ensuremath{\{#1\}}}}
\newcommand{\defcharac}{\delta_C}
\newcommand{\defoutcome}{\delta_o}
\newcommand{\nondefoutcome}{\bar{\defoutcome}}
\newcommand{\defcase}{\case{\defcharac}{\defoutcome}}
\newcommand{\defcaseset}{\case{\emptyset}{-}}
\newcommand{\newcasearg}[1][N]{\case{{#1}_C}{?}}
\newcommand{\newcasecharac}{{N_C}}
\newcommand{\casei}{\alpha}     \newcommand{\caseii}{\beta}     \newcommand{\caseiii}{\gamma}
\newcommand{\caseiv}{\eta}
\newcommand{\casev}{\theta}
\newcommand{\charac}[1]{{#1}_C}
\newcommand{\outcome}[1]{{#1}_o}
\newcommand{\fullcase}[1]{\case{\charac{#1}}{\outcome{#1}}}
\tikzset{attack/.style={-latex}}
\newcommand\AF[2]{\case{#1}{#2}}
\def\myAF{\ensuremath{(\Args,\attacks)}}
\def\myAFalone{\ensuremath{(\Args',\attacks')}}
\def\Args{\ensuremath{\mathit{Args}}}
\def\attacks{\ensuremath{\leadsto}}
\def\groundext{\mathbb{G}}
\def\arga{\ensuremath{\alpha}}
\def\argb{\ensuremath{\beta}}
\def\argc{\ensuremath{\gamma}}
\def\coherent{coherent}
\def\coherence{coherence}
\def\Coherence{Coherence}
\newcommand{\wellbehaved}{regular} \newcommand{\concision}{requirement 3 in the second bullet of Definition~\ref{def:dear-aacbr-def}}
\def\aaD{\ensuremath{AF_{\pgeq}(D)}} \def\aaDN{\ensuremath{AF_{\pgeq}(D,\newcasecharac)}}
 \def\caaDN{\ensuremath{cAF_{\pgeq}(D,\newcasecharac)}}
\newcommand{\aaFtwo}[2]{\mbox{\ensuremath{AF_{\pgeq}(#1, #2)}}} \newcommand{\aaFone}[1]{\mbox{\ensuremath{AF_{\pgeq}(#1)}}}
\mathchardef\mhyphen="2D
\newcommand{\aacbr}{\ensuremath{{AA\mhyphen CBR}}} \newcommand{\AACBR}{\aacbr} 
  \newcommand{\paacbr}{\ensuremath{\aacbr_{\pgeq}}} 
\newcommand{\pAACBR}{\paacbr}
\newcommand{\PAACBR}{\paacbr}
\newcommand{\oaacbr}{\ensuremath{\aacbr_{\supseteq}}}  \newcommand{\caacbr}{\ensuremath{c{\paacbr}}} \newcommand{\cAACBR}{\caacbr}
\newcommand{\inferspaacbr}{\vdash_{\paacbr}}
\newcommand{\cinfers}{\vdash_{\learn}}
\def\includable{includable}     \def\includability{includability} \def\surprising{surprising}
\def\surprise{surprise}
\def\sufficient{sufficient}
\newcommand{\lang}{\mathcal{L}}
\newcommand{\learn}{\mathbb{C}}
\newcommand{\infers}{\vdash}
\newcommand{\defendant}{\Delta}
\newcommand{\plaintiff}{\Pi}
 \author{Guilherme Paulino-Passos$^1$\and
  Francesca Toni$^1$

  \affiliations{$^1$Imperial College London, Department of Computing \\  
  \emails{\{g.passos18, f.toni\}@imperial.ac.uk}\\
}
}
\begin{document}

\maketitle
\begin{abstract}
  Recently, abstract argumentation-based models of case-based reasoning ($\aacbr$ in short) have been proposed, originally inspired by the legal domain, but also applicable as classifiers in different scenarios. However, the formal properties of $\aacbr$ as a reasoning system remain largely unexplored. In this paper, we focus on analysing the non-monotonicity properties of a \emph{\wellbehaved} version of $\aacbr$ (that we call $\paacbr$). Specifically, we prove that $\paacbr$ is not cautiously monotonic, a property frequently considered desirable in the literature. We then define a variation of $\paacbr$ which is cautiously monotonic. Further, we prove that such variation is equivalent to using $\paacbr$ with a restricted casebase consisting of all ``{\surprising}'' and ``\sufficient'' cases in the original casebase. As a by-product, we prove that this variation of $\paacbr$ is cumulative, rationally monotonic, and empowers a principled treatment of noise in ``incoherent'' casebases. Finally, we illustrate $\aacbr$ and cautious monotonicity questions on a case study on the U.S. Trade Secrets domain, a legal casebase.
\end{abstract}

\section{Introduction}
\label{sec:orgb98c4b7}

\emph{Case-based reasoning (CBR)} relies upon known solutions for problems (past cases) to infer solutions for unseen problems (new cases), based upon retrieving past cases which are ``similar'' to the new cases. 
It  is widely used in legal settings (e.g. \citealp{trevor}; \citealp{DBLP:conf/kr/CyrasST16}), for classification (e.g. via the k-NN algorithm and, recently, within the DEAr methodology \cite{dear-2020}) and for explanation (e.g. see
\citealp{DBLP:journals/air/NugentC05}; \citealp{DBLP:conf/ijcai/KennyK19}; \citealp{dear-2020}).

In this paper we focus on a recent approach to CBR based upon an argumentative reading of (past and new) cases  
\cite{DBLP:conf/kr/CyrasST16,DBLP:conf/comma/CyrasST16,Cocarascu:2018,DBLP:journals/eswa/CyrasBGTDTGH19,dear-2020}, and using 
\emph{Abstract Argumentation (AA)} \cite{Dung:95} as the underpinning machinery.
We will refer to all proposed incarnations of this approach in the literature generically as \AACBR\ (the acronym used in the original paper \cite{DBLP:conf/kr/CyrasST16}): they all generate an AA framework from a CBR problem: a graph structure where cases are arguments, ``more specific'' past cases attack ``less specific'' past cases or a ``default argument'' (which embeds a sort of bias), and new cases attack ``irrelevant'' past cases; then, CBR is reduced to testing membership of this default argument in the grounded extension \cite{Dung:95}. The use of argumentation in \AACBR\ naturally paves the way towards explanation generation
for CBR tasks, e.g. in the form of dispute trees in \cite{DBLP:conf/comma/CyrasST16,dear-2020} or excess features in \cite{DBLP:journals/eswa/CyrasBGTDTGH19}, possibly 
for supporting interactions with users and CBR, building upon recent research on incorporating feedback in recommender systems \cite{DBLP:journals/ai/RagoCBLT21} and showing influence structures from neural network classifiers \cite{DBLP:conf/atal/DejlHMMSVAL0T21}.

Different incarnations of \aacbr\  use different mechanisms for defining the aforementioned ``specificity'', ``irrelevance'' and ``default argument'': the original version in \cite{DBLP:conf/kr/CyrasST16} is applicable only to cases characterised by sets of features and defines all three notions in terms of subsets, while the version used for classification in \cite{dear-2020} defines specificity in terms of a generic partial order, irrelevance in terms of a generic relation and default argument in terms of a generic characterisation. Thus, it is in principle applicable to cases characterised in any way, as sets of features or unstructured \cite{dear-2020}.
We will study a special,  \emph{\wellbehaved} instance ($\paacbr$) of this more recent presentation, 
in which irrelevance and the default argument are both defined 
via specificity (and in particular the default argument is defined in terms of the most specific case). 
\paacbr\ admits the original \aacbr{} in \cite{DBLP:conf/kr/CyrasST16} as an instance, obtained by choosing the partial order to be the subset relation and by restricting attention to ``\coherent'' casebases 
(whereby there is no ``noise'', in that no two cases with different outcomes are characterised by the same set of features).

\aacbr\ was originally inspired by the legal domain in \cite{DBLP:conf/kr/CyrasST16}, but {some incarnations of \aacbr, integrating dynamic features, have proven useful in predicting and explaining} the passage of bills in the UK Parliament \cite{DBLP:journals/eswa/CyrasBGTDTGH19}, and 
instantiations of the more generic version of \citet{dear-2020} have shown to be fruitfully applicable as classifiers 
\cite{dear-2020}.
We study \emph{non-monotonicity} properties of \paacbr\ understood at the same time as a reasoning system and as a classifier. 
These properties, typically considered in logic,  intuitively characterise in which sense systems may stop inferring some conclusions when more information becomes available \cite{generalpatterns}.
These properties are thus related to modelling inference which is tentative\ {and} defeasible, as opposed to the indefeasible form of inference of classical logic. Non-monotonicity properties have already been studied in argumentation, e.g. for ABA, ABA+ \cite{DBLP:conf/tafa/CyrasT15,DBLP:journals/corr/CyrasT16}, $ASPIC^{+}$ \cite{DBLP:conf/ecai/Dung14,Dung_2016} and logic-based argumentation \cite{DBLP:conf/comma/Hunter10}. {We study them for the application of argumentation to classification via {\paacbr}.}

Specifically, we prove that the kind of inference underpinning \paacbr\ lacks a standard non-monotonicity property, namely \emph{cautious monotonicity}, sanctioning, intuitively, that if a conclusion is added to the set of premises (here, the casebase), then no conclusion is lost, that is, everything which was inferable still is so. In terms of a {supervised} classifier, satisfying cautious monotonicity culminates in being ``closed'' under self-training. That is, augmenting the dataset with conclusions inferred by the classifier itself does not change the classifier.
Then, we make a two-fold contribution: we define (formally and algorithmically) 
a provably cautiously monotonic variant of \paacbr, that we call \caacbr, and prove that  it is equivalent to \paacbr\ applied to a restricted casebase consisting of all ``{\surprising}'' and ``{\sufficient}'' cases in the original casebase.
We also show that cautious monotonicity of \caacbr\ {leads to} the desirable properties of \emph{cumulativity} and \emph{rational monotonicity}, and that, as a by-product, our cautiously monotonic variant leads to a desirable treatment of noise in ``incoherent'' casebases,
 including cases with the same set of features but different outcomes.
Incoherence may result from a limited language to express features or genuine errors in generating the casebases. Independently of the reasons behind incoherence, and especially when this is outside the control of reasoning system designers, it is important that the reasoning system is able to tolerate it. It is interesting that \caacbr\ deals with it serendipitously, as a direct consequence of insuring cautious monotonicity.

This paper generalises our previous work \cite{caacbr} by also dealing with in\coherent\ casebases, presenting a case study, and discussing the position of our contribution in the related literature.
We omit some proofs in the main text for lack of space but they are available in the appendix.

\section{Motivating illustration}
\label{sec:org9dc41b1}

In this section we introduce a simple setting for the informal illustration 
of the original \AACBR, its non-monotonicity  
and the desirability of some restrictions thereof, as well as problems raised by the presence of in\coherence\ in the case base when deploying \AACBR. Thus, this section serves as a motivating illustration for our approach, which restricts non-monotonicity and is in\coherence-tolerant.

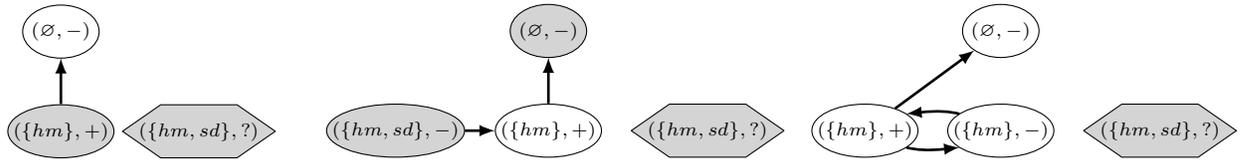
\begin{figure*}[t!hb]
  \centering
\begin{subfigure}[t]{0.25\textwidth}
  \centering
  ~
   \begin{tikzpicture}[>=latex,line join=bevel, scale=0.8, font=\scriptsize]
     \pgfsetlinewidth{1bp}
\pgfsetcolor{black}
\draw [->] (25.0bp,25.343bp) .. controls (25.0bp,28.924bp) and (25.0bp,32.924bp)  .. (25.0bp,46.997bp);
\begin{scope}
     \definecolor{strokecol}{rgb}{0.0,0.0,0.0};
     \pgfsetstrokecolor{strokecol}
     \definecolor{fillcol}{rgb}{0.83,0.83,0.83};
     \pgfsetfillcolor{fillcol}
     \filldraw [thin] (126.0bp,12.5bp) -- (108.0bp,25.0bp) -- (72.0bp,25.0bp) -- (54.0bp,12.5bp) -- (72.0bp,0.0bp) -- (108.0bp,0.0bp) -- cycle;
     \draw (90.0bp,12.5bp) node {$\caseset[hm,sd]{?}$};
   \end{scope}
\begin{scope}
     \definecolor{strokecol}{rgb}{0.0,0.0,0.0};
     \pgfsetstrokecolor{strokecol}
     \definecolor{fillcol}{rgb}{0.83,0.83,0.83};
     \pgfsetfillcolor{fillcol}
     \filldraw [opacity=1] [thin] (25.0bp,12.5bp) ellipse (25.0bp and 12.5bp);
     \draw (25.0bp,12.5bp) node {$\caseset[hm]{+}$};
   \end{scope}
\begin{scope}
     \definecolor{strokecol}{rgb}{0.0,0.0,0.0};
     \pgfsetstrokecolor{strokecol}
     \draw [thin] (25.0bp,59.5bp) ellipse (18.0bp and 12.5bp);
     \draw (25.0bp,59.5bp) node {$\defcaseset$};
   \end{scope}
\end{tikzpicture}
   \caption{{Initial AA} framework. {\AACBR\ predicts outcome ``$+$'' for the new case}.}
  \label{fig:example-legal-1}
\end{subfigure}
~
\begin{subfigure}[t]{0.35\textwidth}
  \centering
   \begin{tikzpicture}[>=latex,line join=bevel, scale=0.8, font=\scriptsize]
  \pgfsetlinewidth{1bp}
\begin{scope}
  \pgfsetstrokecolor{black}
  \definecolor{strokecol}{rgb}{1.0,1.0,1.0};
  \pgfsetstrokecolor{strokecol}
  \definecolor{fillcol}{rgb}{1.0,1.0,1.0};
  \pgfsetfillcolor{fillcol}
  \filldraw (0.0bp,0.0bp) -- (0.0bp,72.0bp) -- (215.5bp,72.0bp) -- (215.5bp,0.0bp) -- cycle;
\end{scope}
  \pgfsetcolor{black}
\draw [->] (104.5bp,25.343bp) .. controls (104.5bp,28.924bp) and (104.5bp,32.924bp)  .. (104.5bp,46.997bp);
\draw [->] (65.125bp,12.5bp) .. controls (66.452bp,12.5bp) and (67.778bp,12.5bp)  .. (79.275bp,12.5bp);
\begin{scope}
  \definecolor{strokecol}{rgb}{0.0,0.0,0.0};
  \pgfsetstrokecolor{strokecol}
  \draw [thin] (104.5bp,12.5bp) ellipse (25.0bp and 12.5bp);
  \draw (104.5bp,12.5bp) node {$\caseset[hm]{+}$};
\end{scope}
\begin{scope}
  \definecolor{strokecol}{rgb}{0.0,0.0,0.0};
  \pgfsetstrokecolor{strokecol}
  \definecolor{fillcol}{rgb}{0.83,0.83,0.83};
  \pgfsetfillcolor{fillcol}
  \filldraw [opacity=1] [thin] (32.5bp,12.5bp) ellipse (32.5bp and 12.5bp);
  \draw (32.5bp,12.5bp) node {$\caseset[hm,sd]{-}$};
\end{scope}
\begin{scope}
  \definecolor{strokecol}{rgb}{0.0,0.0,0.0};
  \pgfsetstrokecolor{strokecol}
  \definecolor{fillcol}{rgb}{0.83,0.83,0.83};
  \pgfsetfillcolor{fillcol}
  \filldraw [thin] (215.5bp,12.5bp) -- (197.5bp,25.0bp) -- (161.5bp,25.0bp) -- (143.5bp,12.5bp) -- (161.5bp,0.0bp) -- (197.5bp,0.0bp) -- cycle;
  \draw (179.5bp,12.5bp) node {$\caseset[hm,sd]{?}$};
\end{scope}
\begin{scope}
  \definecolor{strokecol}{rgb}{0.0,0.0,0.0};
  \pgfsetstrokecolor{strokecol}
  \definecolor{fillcol}{rgb}{0.83,0.83,0.83};
  \pgfsetfillcolor{fillcol}
  \filldraw [opacity=1] [thin] (104.5bp,59.5bp) ellipse (18.0bp and 12.5bp);
  \draw (104.5bp,59.5bp) node {$\defcaseset$};
\end{scope}
\end{tikzpicture}
   \caption{{Revised} framework. The added past case changes the \AACBR-predicted outcome to ``$-$''.}
  \label{fig:example-legal-2}
\end{subfigure}
~ 
\begin{subfigure}[t]{0.3\textwidth}
  \centering
\begin{tikzpicture}[>=latex,line join=bevel, scale=0.8, font=\scriptsize]
  \pgfsetlinewidth{1bp}
\begin{scope}
  \pgfsetstrokecolor{black}
  \definecolor{strokecol}{rgb}{1.0,1.0,1.0};
  \pgfsetstrokecolor{strokecol}
  \definecolor{fillcol}{rgb}{1.0,1.0,1.0};
  \pgfsetfillcolor{fillcol}
  \filldraw (0.0bp,0.0bp) -- (0.0bp,72.0bp) -- (200.0bp,72.0bp) -- (200.0bp,0.0bp) -- cycle;
\end{scope}
  \pgfsetcolor{black}
\draw [->] (39.204bp,22.931bp) .. controls (47.775bp,29.225bp) and (58.844bp,37.354bp)  .. (76.565bp,50.368bp);
\draw [->] (44.5bp,4.5555bp) .. controls (49.454bp,3.4216bp) and (54.408bp,2.9618bp)  .. (69.531bp,4.5627bp);
\draw [->] (69.531bp,20.437bp) .. controls (64.577bp,21.574bp) and (59.623bp,22.037bp)  .. (44.5bp,20.444bp);
\begin{scope}
  \definecolor{strokecol}{rgb}{0.0,0.0,0.0};
  \pgfsetstrokecolor{strokecol}
  \draw [thin] (25.0bp,12.5bp) ellipse (25.0bp and 12.5bp);
  \draw (25.0bp,12.5bp) node {$\caseset[hm]{+}$};
\end{scope}
\begin{scope}
  \definecolor{strokecol}{rgb}{0.0,0.0,0.0};
  \pgfsetstrokecolor{strokecol}
  \draw [thin] (89.0bp,12.5bp) ellipse (25.0bp and 12.5bp);
  \draw (89.0bp,12.5bp) node {$\caseset[hm]{-}$};
\end{scope}
\begin{scope}
  \definecolor{strokecol}{rgb}{0.0,0.0,0.0};
  \pgfsetstrokecolor{strokecol}
  \draw [thin] (89.0bp,59.5bp) ellipse (18.0bp and 12.5bp);
  \draw (89.0bp,59.5bp) node {$\defcaseset$};
\end{scope}
\begin{scope}
  \definecolor{strokecol}{rgb}{0.0,0.0,0.0};
  \pgfsetstrokecolor{strokecol}
  \definecolor{fillcol}{rgb}{0.83,0.83,0.83};
  \pgfsetfillcolor{fillcol}
  \filldraw [thin] (200.0bp,12.5bp) -- (182.0bp,25.0bp) -- (146.0bp,25.0bp) -- (128.0bp,12.5bp) -- (146.0bp,0.0bp) -- (182.0bp,0.0bp) -- cycle;
  \draw (164.0bp,12.5bp) node {$\caseset[hm,sd]{?}$};
\end{scope}
\end{tikzpicture}
\caption{In\coherent{} casebase. \AACBR\ predicts outcome ``$+$'', but the default argument is not attacked by arguments in $\groundext$.} 
  \label{fig:example-legal-3}
\end{subfigure}
\caption{AA frameworks when using \AACBR\ for Examples \ref{running-example} and \ref{ex:noise}. Past cases {(with outcomes)} {and} new case {(with unknown outcome)} are arguments. (Grounded extensions $\groundext$ are shaded.)}
\label{fig:example-legal-intro}
\end{figure*}

\begin{example}[Non-Monotonicity]
\label{running-example}
Consider a simplified legal system built by cases and adhering, like most modern legal systems, to the principle by which, unless proven otherwise, no person is to be considered guilty of a crime. This can be represented by a ``default argument'' \(\defcaseset\), indicating that, in the absence of any information about any person, the legal system should infer a negative outcome  $-$ (that the person is \emph{not} guilty). {\(\defcaseset\) can be understood as an argument, in the AA sense, given that it} is merely what is called a relative presumption, since it is open to proof to the contrary{, e.g.} by proving that the person did indeed commit a crime. Let us consider here one possible crime: homicide\footnote{This
		is a toy example, so the terms used do not correspond to a specific jurisdiction.} (\emph{hm}). In one case, it was established that the defendant committed homicide, and he was considered guilty, represented as \(\caseset[hm]{+}\).

	{Consider now a new case \(\caseset[hm,sd]{?}\){, with an unknown outcome,} of a defendant who committed homicide, but for which it was proven that it was in self-defence (\emph{sd}). {In order to predict the new case's} outcome by CBR, $\aacbr$ reduces {the} prediction problem to that of membership of the default argument in the grounded extension~\cite{Dung:95} $\groundext$ of the AA framework} 
	in Figure \ref{fig:example-legal-1}: given that \(\defcaseset\not\in\groundext\), the predicted outcome is positive (i.e. guilty), disregarding $sd$ and, indeed, no matter what other feature this case may have. Thus, up to this point, having the feature $hm$ is a sufficient condition for {predicting} guilty.
If, however, the courts decide that for this new case the defendant should be acquitted, the case \(\caseset[hm,sd]{-}\) enters in our casebase. Now, having the feature $hm$ is {no longer} a sufficient condition for {predicting} guilty, {and} any case {with} both $hm$ and $sd$ will {be predicted} a negative  outcome (i.e. {that the person is} innocent). {This is the case for predicting the outcome of a new case with again  both $hm$ and $sd$, in \AACBR\ using the AA framework}  
in Figure \ref{fig:example-legal-2}.  Thus, adding a new case to the casebase removed some conclusions which were inferred from the previous, smaller casebase, showing that \AACBR\ is indeed non-monotonic. This does not mean that some restrictions on non-monotonicity might not be desirable. For instance, we might expect in a legal system that, if for the current case law, two cases are to be judged in a certain way, then one of the cases happening in court and indeed being decided in that way would not affect the body of case law itself, thus the outcome for the second case would be expected to be unchanged.
\end{example}

The following example illustrates the challenges posed by in\coherent\ casebases, with noisy cases, in \aacbr.

\begin{example}[Noise-intolerance]
	\label{ex:noise}
	Consider a different augmentation of the initial casebase in Example~\ref{running-example}, resulting in the casebase \(\{\caseset[hm]{+}, \caseset[hm]{-}\}\), whereby a positive and a negative outcomes are recorded for exactly the same profile for defendants. This can be deemed to be ``incoherent''\footnote{This casebase may result from a limited language for characterising cases, e.g. ignoring the possibility of indicating core differences between defendants, such as that the defendant characterised by $ \caseset[hm]{-}$ is a minor. We assume here that this incoherence cannot be rectified by a language variation, or simply that it comes from the data, and we cannot remove it based on the data alone.}. In \aacbr, this case would have a positive outcome (guilty), since the default argument is not in the grounded extension. However, this seems unsatisfactory, since the default argument is not attacked by any argument in the grounded extension of the corresponding AA framework (see Figure~\ref{fig:example-legal-3}). In some sense, this means that every past case was rejected, but the outcome is guilty nonetheless. Thus what is deciding the outcome is not a relevant case per se, but the incoherence itself. This can be regarded as a form of intolerance to noise. Note that incoherence does not always give this form of noise intolerance: if a past case \(\caseset[hm,sd]{+}\) were included, the outcome would be the same, but the default argument would be attacked by (some argument in) the grounded extension. 

	Our form of \aacbr\ is guaranteed to be tolerant of noise always, independently of the casebase.
\end{example}

\section{Preliminaries}
\label{sec:org43bfd8b}
\paragraph{{\bf Abstract argumentation.}}
\label{sec:orgd45d2e6}
An \emph{abstract argumentation framework (AF)} \cite{Dung:95} is a pair $\myAF$,
where $\Args$ is a set (of \emph{arguments}) 
and $\attacks$ is a binary relation on $\Args$. 
For $\arga, \argb \in \Args$, if $\arga \attacks \argb$, 
then we say that $\arga$ \emph{attacks} $\argb$
and that $\arga$ is an \emph{attacker of} $\argb$. 
For a set of arguments $E \subseteq \Args$ and an argument $\arga \in \Args$, 
$E$ \emph{defends} $\arga$ if for all $\argb \attacks \arga$ there exists $\argc \in E$ such that $\argc \attacks \argb$. 
Then, 
\label{defn:semantics} 
the \emph{grounded extension} of $\myAF$ can be constructed as 
$\groundext = \bigcup_{i \geqslant 0} G_i$, 
where $G_0$ is the set of all unattacked arguments, 
and $\forall i \geqslant 0$, $G_{i+1}$ is the set of arguments that $G_i$ defends.
For any $\myAF$, 
the grounded extension $\groundext$ 
always exists and is unique
and, if $\myAF$ is well-founded \cite{Dung:95}, extensions under other semantics (e.g. {stable extensions}) are equal to $\groundext$. 
In particular for finite AFs, $\myAF$ is well-founded iff it is acyclic.
{Given $\myAF$, we will sometimes use $\arga \in \myAF$ to stand for $\arga \in \Args$.}
\paragraph{{\bf Non-monotonicity properties.}}
\label{sec:non-monot-prop}

We will be interested in the following properties.\footnote{We are mostly following the treatment of \cite{generalpatterns}.}
An arbitrary inference relation $\infers$ (for a language including, in particular, sentences $a, b$, etc., {with negations $\neg a$ and $\neg b$, etc.}, and sets of sentences $A,B$)
is said to satisfy: 
\begin{enumerate}
  \item {\em non-monotonicity}, iff $A \infers a$ and $A \subseteq B$ do not imply that $B \infers a$;
\item \emph{cautious monotonicity}, iff $A \infers a$ and $A \infers b$ imply that $A \cup \{a\} \infers b$;
  \item \emph{cut}, iff $A \infers a$ and $A \cup \{a\} \infers b$ imply that $A \infers b$;
  \item \emph{cumulativity}, iff $\vdash$ is both cautiously monotonic and satisfies cut;
  \item {\em rational monotonicity}, iff $A \infers a$ and $A \not\infers \neg b$ imply that $A \cup \{b\} \infers a$; \item {\em completeness}, iff either $A \infers a$ or $A \infers \neg a$.
\end{enumerate}

\section{Abstract argumentation for case-based reasoning}
\label{sec:aacbr}

\label{sec:org3dbe745}

Here, we define \paacbr, adapting definitions from \cite{dear-2020}.
All incarnations of \aacbr, including \paacbr, 
map a {\emph{dataset}} \(D\) of {{\em examples}} labelled with an {\em outcome} and an {\em {unlabelled example}} (with unknown outcome) into an AF. {The dataset may be understood as a {\em casebase}, the labelled examples as {\em past cases} and the unlabelled example as a {\em new case}: we will use these terminologies interchangeably throughout.}
In this paper, as in \cite{dear-2020},
{examples/}cases have a characterisation (e.g., as in \cite{DBLP:conf/kr/CyrasST16}, characterisations may be sets of features), and outcomes are chosen from two available ones, one of which is selected up-front as the \emph{default outcome}.
Finally, in the spirit of \cite{dear-2020}, we assume that the set of characterisations of (past and new) cases  is equipped with a partial order {$\pleq$} (whereby $\arga \!\prec \!\argb$ {holds if $\arga \!\pleq \!\argb$ and $\arga \!\neq \!\argb$ and} is read ``$\arga$ is less \emph{specific} than $\argb$'') and with a relation $\not \sim$  (whereby $\arga \!\not\sim \!\argb$ is read as ``$\argb$ is {\em irrelevant} to $\arga$'').
Formally: 

\begin{definition}[Adapted from \cite{dear-2020}]
  \label{dear-miner}
  Let $X$ be a set of \emph{characterisations}, equipped with partial order $\pl$ and binary relation $\not\sim$.  Let $Y \!= \!\{\defoutcome,\nondefoutcome\}$ be the set of (all possible) \emph{outcomes}, with $\defoutcome$ the {\em default outcome}.  
Then, a {\em casebase} $D$ is a finite set such that  $D \!\subseteq \!X \!\times \!Y$
	(thus a {\em past case} $\alpha\in D$   is of the form $\case{\alpha_{C}}{\alpha_{o}}$ for $\alpha_{C}\!\in \!X$, $\alpha_{o}\!\in \!Y$)
        and a {\em new case}  is of the form $\newcasearg$  for $\newcasecharac \!\in \!X$.
        {We also discriminate a particular element $\defcharac \!\in \!X$ and define the \emph{default argument} $\defcase \!\in \!X \!\times \!Y$.}

	A casebase $D$ is {\em \coherent} if there are no two cases  $\case{\alpha_{C}}{\alpha_{o}},\case{\beta_{C}}{\beta_{o}}\in D$ such that $\alpha_{C} = \beta_{C}$ but $\alpha_{o} \neq \beta_{o}$, and it is \emph{in\coherent} otherwise.
\end{definition}
For simplicity of notation, we sometimes extend the definition of $\pgeq$ to $X \times Y$, by setting $\case{\alpha_c}{\alpha_o} \pgeq \case{\beta_c}{\beta_o}$ iff $\alpha_c \pgeq \beta_c$.\footnote{In \cite{dear-2020}, $\pgeq$ was directly given over $X\times Y$. Note that, 
when $D$ is \coherent, our ``lifted'' $\pgeq$ is guaranteed to be a partial order on $X \!\times \!Y$ (and thus equivalent to the one in \cite{dear-2020}), but when $D$ is in\coherent{} anti-symmetry may fail for two cases with different outcomes but same characterisation, and thus $\pgeq$ is merely a preorder on $X \times Y$. 
} 

\begin{definition} [Adapted from \cite{dear-2020}] \label{def:dear-aacbr-def}
  The \emph{AF mined from a dataset $D$ and a new case $\newcasearg$} is $\myAF$,
  in which:
  \begin{itemize}
  \item $\Args=  D \cup \{\defcase\} \cup \{\newcasearg\}$ ;    
  \item  for $(\alpha_C, \alpha_o), (\beta_C, \beta_o) \in D \cup \{ \defcase \}$, it holds that $(\alpha_C, \alpha_o) \attacks (\beta_C, \beta_o)$ iff

    \begin{enumerate}
\item $\alpha_o \neq \beta_o$,

    \item {$\alpha_C  \pgeq \beta_C$, and}
      
    \item {$\nexists (\gamma_C, \gamma_o) \in D\cup \{ \defcase \}$ with $\alpha_C \pg \gamma_C \pg \beta_C$} and {$\gamma_o = \alpha_o$};

\end{enumerate}

  \item  for $(\beta_C, \!\beta_o) \!\!\in \!\!D \cup \!\{ \!\defcase \!\}$, it holds that $\case{\newcasecharac}{?} \!\!\attacks \!\!(\beta_C, \!\beta_o)$ iff 
    $\newcasearg \!\!\not \sim \!\!(\beta_C,\!\beta_o)$.
  \end{itemize}
	{The \emph{AF mined from a dataset $D$ alone} is $\myAFalone$, 
	with 
  $\Args'=  \Args \setminus \{\newcasearg\}$ and
	$\attacks' =\attacks \cap (\Args'\times \Args')$.}
\end{definition}
Note that if $D$ is \coherent, then the ``equals'' case in item 2 of the definition of attack will never apply. As a result, the AF mined from a \coherent\ $D$ (and any $\newcasearg$) is guaranteed to be well-founded, in the sense of \citet{Dung:95}.

\begin{definition}[Adapted from \cite{dear-2020}]
	Let $\groundext$ be the grounded extension of the AF mined from $D$ and $\newcasearg$, with default argument $\defcase$.  
  The \defemph{outcome} \defemph{for $\newcasecharac$} is $\defoutcome$ if $\defcase$ is in $\groundext$, and $\nondefoutcome$ otherwise. 
      \end{definition}
In this paper we focus on {a particular case of this scenario}:
\begin{definition} \label{def:wellbehav}
  The AF mined from $D$ alone and the AF mined from $D$ and $\newcasearg$,
	with default argument $\defcase$, are \defemph{\wellbehaved} when the following holds:
        \begin{enumerate}
\item the irrelevance relation $\not\sim$ is defined as: $x_1 \not \sim x_2$ iff $x_1 \not \pgeq x_2$, and
\item $\defcharac$ is the least element of $X$.\footnote{Indeed this is not a strong condition, since it can be proved that if $\charac\casei \not \pgeq \defcharac$ then all cases $\fullcase\casei$ in the casebase could be removed, as they would never change an outcome. On the other hand, assuming also the first condition in Definition \ref{def:wellbehav}, if $\case{\charac\casei}{?}$ is the new case and $\charac\casei \not \pgeq \defcharac$, then the outcome  is $\nondefoutcome$ necessarily. }

\end{enumerate}
\end{definition}
{This restriction connects the treatment of a characterisation $\charac\casei$ as a new case and as a past case \ifincludeNN We will see below that these conditions are necessary in order to satisfy desirable properties, such as Theorem \ref{theo:nearest_neighbours}.\else 
and is necessary in order to satisfy desirable properties, such as a relation between new cases and ``nearest'' past cases, omitted here.\fi}

From now on,  we will restrict our attention to \wellbehaved\ mined AFs. We will refer to the (\wellbehaved) AF mined from  $D$ and $\newcasearg$, with  default argument $\defcase$, as  
\aaDN, and to the (\wellbehaved) AF mined from  $D$ alone as \aaD.
Also, for short, given $\aaDN$,  with default argument $\defcase$,
we will refer to the outcome for $\newcasecharac$ as $\paacbr(D,\newcasecharac)$.\footnote{In the notation we omit $\defcase$, and leave it implicit instead for readability.} 
Unless otherwise stated, we will 
assume arbitrary $X$, $Y$, $D$, $\newcasearg$, and $\defcase$ (satisfying the  previously defined 
constraints).
Finally, we will refer to $\paacbr$ instantiated with $\pgeq=\supseteq$ and $\defcase = \defcaseset$ as $\oaacbr$.

\label{sec:properties}

\ifincludeNN
In the remainder of this section we will identify some properties of \paacbr, concerning its behaviour as a form of CBR.

\fi

\section{Non-monotonicity analysis of classifiers}
\label{sec:orge5b6025}

{In this section we provide a generic analysis of the non-monotonicity properties of data-driven classifiers, using $D$, $X$ and $Y$ to denote generic inputs and outputs of classifiers, admitting our casebases, characterisations and outcomes as special instances. Later in the paper, we will apply this analysis to \paacbr\ and our modification thereof. 
} 
Typically, a classifier can be understood as a function from an input set $X$ to an output set $Y$.  In machine learning, classifiers are obtained by {training with} an initial, finite $D \subseteq (X \times Y)$, called the training set.
{In (any form of) \aacbr, $D$ can also be seen as a training set of sorts.}  
Thus, we will characterise a classifier as a two-argument function \(\learn\) that maps from a dataset \(D{\subseteq (X\times Y)}\) and from a new input $x \in X$ to a prediction $y \in Y$.\footnote{{Notice that this understanding
relies upon the assumption that classifiers are deterministic. Of course this is not the case for many machine learning models, e.g. artificial neural networks trained using stochastic gradient descent and randomised hyperparameter search. This understanding is however in line with recent work using 
decision functions as approximations of classifiers whose output needs 
explaining (e.g. see \cite{Shih_19}). 
Moreover, it works well 
when analysing $\PAACBR$.}} 
{Notice that this function is total, in line with the common assumptions that classifiers generalise beyond their training dataset.}

Let us model directly the relationship between the dataset \(D\) and the predictions it makes {via the classifier} as an inference system in the following way:

\begin{definition} \label{def:non-monot-analysis}
	Given a classifier \(\learn{:\powerset{(X\times Y)}\times X \rightarrow Y}\), let $\lang=\lang^{+} \cup \lang^{-}$ be a language consisting of atoms \(\lang^{+} = X \times Y\) and negative sentences  
\(\lang^{-} \!= \!\{\neg(x,y) | $ $(x,y) \in X \times Y\}\). 
Then,  \(\infers_{\learn}\) is an \emph{inference relation} from \(2^{\lang^{+}}\) to \(\lang\) such that 
  \begin{itemize}
	  \item \(D \infers_{\learn} (x,y)\), iff  {\(\learn(D,x) = y\)};
  \item \(D \infers_{\learn} \neg(x,y)\), {iff there is a $y'$ such that}
	  {\(\learn(D,x) = y'\)}
          and \(y' \neq y\).\footnote{{We could equivalently have defined
              \(D \infers_{\learn} \neg(x,y)\) iff \(\learn(D,x) \neq y\).
              We have not done so as the used definition can be generalised for a scenario in which $\learn$ is not necessarily a total function. This scenario is left for future work.}}
  \end{itemize}
\end{definition}
{Intuitively, $\learn$ defines a language consisting of atoms (representing 
labelled examples) and their negations, and $\infers_{\learn}$ applies a sort of closed world assumption around $\learn$. 
}

Then, we can study  non-monotonicity properties (see Section \ref{sec:non-monot-prop})  of $\infers_{\learn}$. Here, completeness causes them to collapse.

\begin{theorem}\label{theo:compl}
\label{theo:cm-cut}
\label{theo:rm}
\begin{enumerate}
\item 
$\infers_{\learn}$ is \emph{complete}, i.e.		
  for every $(x,y)\in (X\times Y)$, either $D \infers_{\learn} (x,y)$ or $D \infers_{\learn} \neg (x,y)$.
\item {$\infers_{\learn}$ is \emph{consistent}, i.e.
    for every $(x,y)\in (X\times Y)$, it does not hold that both $D \infers_{\learn} (x,y)$ and $D \infers_{\learn} \neg (x,y)$.}
\item $\infers_{\learn}$ is cautiously monotonic iff it satisfies cut.
\item $\infers_{\learn}$ is cautiously monotonic {iff} it is cumulative.
\item $\infers_{\learn}$ is cautiously monotonic {iff} it satisfies rational monotonicity.
\end{enumerate}
\end{theorem}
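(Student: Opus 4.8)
The plan is to exploit the very rigid structure imposed by parts (1) and (2): since $\infers_{\learn}$ is both complete and consistent, for every pair $(x,y)$ exactly one of $D \infers_{\learn} (x,y)$ and $D \infers_{\learn} \neg(x,y)$ holds, and moreover $D \infers_{\learn} \neg(x,y)$ is equivalent to $\learn(D,x) \neq y$ (the footnote's alternative definition). So the whole inference relation is pinned down by the function $\learn$ alone, and negative sentences carry exactly the complementary information. I would first prove parts (1) and (2) directly from Definition~\ref{def:non-monot-analysis}: totality of $\learn$ gives completeness, and functionality (well-definedness) of $\learn$ gives consistency. These are the routine warm-up steps.

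For part (3), I would show both implications. Assume cautious monotonicity; to get cut, suppose $D \infers_{\learn} a$ and $D \cup \{a\} \infers_{\learn} b$ and derive $D \infers_{\learn} b$. The key observation is that $a \in \lang^{+}$, say $a = (x,y)$ with $\learn(D,x)=y$, so $D \infers_{\learn} a$ together with completeness/consistency means $a$ is the ``correct'' atom at $x$ according to $D$. Then one shows $\learn(D \cup \{a\}, \cdot)$ and $\learn(D, \cdot)$ must agree enough, using cautious monotonicity in the form: everything $D$ infers, $D \cup \{a\}$ still infers. Conversely, assuming cut, derive cautious monotonicity by a symmetric argument, again leaning on the fact that adding an atom $a$ that is already ``inferred'' (i.e.\ $\learn(D,x)=y$ where $a=(x,y)$) is the only kind of augmentation cautious monotonicity and cut ever talk about. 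The cleanest route is probably to reformulate both cautious monotonicity and cut, under the standing assumption that $a$ is a positive sentence already inferred by $D$, as the single statement ``$\learn(D,x)=y \implies \learn(D\cup\{(x,y)\}, z) = \learn(D,z)$ for all $z$'' — or rather, the two halves of this biconditional-flavoured claim — and observe cautious monotonicity gives one containment of inferred sets and cut gives the other, so each implies the statement needed for the other.

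Parts (4) and (5) then follow quickly. For (4), cumulativity is by definition cautious monotonicity plus cut, and part (3) says cut $\Leftrightarrow$ cautious monotonicity, so cumulativity $\Leftrightarrow$ cautious monotonicity. For (5), I would show cautious monotonicity implies rational monotonicity: given $D \infers_{\learn} a = (x,y)$ and $D \not\infers_{\learn} \neg b$ where $b = (x',y')$, note $D \not\infers_{\learn} \neg b$ means $\learn(D,x') = y'$, i.e.\ $D \infers_{\learn} b$; so $D \infers_{\learn} a$ and $D \infers_{\learn} b$, and cautious monotonicity yields $D \cup \{b\} \infers_{\learn} a$, which is exactly rational monotonicity. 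For the converse, rational monotonicity implies cautious monotonicity: given $D \infers_{\learn} a$ and $D \infers_{\learn} b = (x',y')$, consistency gives $D \not\infers_{\learn} \neg b$, so rational monotonicity applies and gives $D \cup \{b\} \infers_{\learn} a$. The main obstacle, and the only place needing genuine care, is part (3): one must be careful that the augmenting sentence in cautious monotonicity and cut is always a positive, already-inferred atom, so that the two properties really are talking about the same restricted class of dataset extensions $D \mapsto D \cup \{(x,\learn(D,x))\}$; everything else is bookkeeping with completeness and consistency.
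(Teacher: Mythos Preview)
Your proposal is correct and follows essentially the same approach as the paper. Parts (1), (2), (4), and (5) match the paper's proof almost verbatim: totality and functionality of $\learn$ give completeness and consistency; (4) is immediate from (3); and for (5) the paper observes, exactly as you do, that under completeness $D \not\infers_{\learn} \neg b$ collapses to $D \infers_{\learn} b$, so rational monotonicity and cautious monotonicity coincide.

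The only stylistic difference is in part (3). The paper gives a short direct contradiction: assume cautious monotonicity, suppose $D \infers_{\learn} p$ and $D\cup\{p\} \infers_{\learn} q$; by completeness either $D \infers_{\learn} q$ (done) or $D \infers_{\learn} \neg q$, and in the latter case cautious monotonicity gives $D\cup\{p\} \infers_{\learn} \neg q$, contradicting consistency with $D\cup\{p\} \infers_{\learn} q$. Your ``cleanest route''---reformulating both properties as the single statement $\learn(D,x)=y \Rightarrow \learn(D\cup\{(x,y)\},z)=\learn(D,z)$ for all $z$---is a valid repackaging of the same idea: each of cautious monotonicity and cut gives one inclusion between the inferred-sentence sets of $D$ and $D\cup\{a\}$, and completeness plus consistency force any such inclusion to be an equality (since both sets contain exactly one of each complementary pair). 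This is a minor variation rather than a genuinely different argument; the paper's version is just more compact.
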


\section{Cautious monotonicity in \texorpdfstring{$\bm{\pAACBR}$}{$\pAACBR$}}
\label{sec:orgef4abe6}

Our first main result is about (lack of) cautious monotonicity of 
{the inference relation drawn from the classifier $\paacbr(D,\newcasecharac)$. } 
\begin{theorem}
  \label{theo:aacbr-not-caut-mono}
	{$\vdash_{\paacbr}$} is not cautiously monotonic.
\end{theorem}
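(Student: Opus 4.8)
The plan is to exhibit a concrete counterexample, reusing the legal illustration already developed in Example~\ref{running-example}. Recall that $\vdash_{\paacbr}$ fails cautious monotonicity iff there exist a casebase $D$ and atoms $(x_1,y_1),(x_2,y_2)$ with $D \vdash_{\paacbr} (x_1,y_1)$ and $D \vdash_{\paacbr} (x_2,y_2)$ but $D \cup \{(x_1,y_1)\} \not\vdash_{\paacbr} (x_2,y_2)$; equivalently, $\paacbr(D,x_1)=y_1$ and $\paacbr(D,x_2)=y_2$ but $\paacbr(D \cup \{(x_1,y_1)\}, x_2) \neq y_2$. So it suffices to find one dataset, one ``self-inferred'' case we add, and one further query whose predicted outcome flips.

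First I would instantiate $\paacbr$ as $\oaacbr$ (so $\pgeq = \supseteq$, $\defcase = \defcaseset$) to keep the specificity order concrete, and take $D = \{\caseset[hm]{+}\}$ as in the example, with $X$ a powerset of features including $hm, sd$. Step one: compute $\paacbr(D, \{hm,sd\})$. In \aaFtwo{D}{\{hm,sd\}} the default argument $\defcaseset$ is attacked by $\caseset[hm]{+}$ (since $hm \supseteq \emptyset$, outcomes differ, and there is no intermediate case), and $\caseset[hm]{+}$ is unattacked (the new case $\caseset[hm,sd]{?}$ does not attack it, because $\{hm,sd\} \supseteq \{hm\}$, i.e. $\{hm,sd\} \not\sim \{hm\}$ is false under the regular irrelevance definition); hence $\caseset[hm]{+} \in \groundext$, $\defcaseset \notin \groundext$, and the outcome is $+$. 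So $D \vdash_{\paacbr} (\{hm,sd\}, +)$. Step two: pick the query $(x_2, y_2)$ — here I would take a second query with the same characterisation $\{hm,sd\}$, or indeed any characterisation $\supseteq \{hm\}$, for which the current prediction is also $+$; the simplest is $x_2 = \{hm,sd\}$ again, so $y_2 = +$ and trivially $D \vdash_{\paacbr} (\{hm,sd\},+)$ as well. (If using two literally identical queries feels unsatisfying, one can instead take $x_1 = \{hm,sd\}$ and $x_2 = \{hm, sd, x\}$ for a fresh feature $x$, and check the prediction is still $+$ in $D$ — the argument is the same.)

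Step three — the crux — is to compute $\paacbr(D \cup \{(\{hm,sd\},+)\}, \{hm,sd\})$ and show it is no longer $+$\ldots\ but note the self-inferred case here has outcome $+$, which does \emph{not} flip anything, so the naive reuse of the example does not work directly. The genuine obstacle is precisely this: in Example~\ref{running-example} the flip was caused by adding $\caseset[hm,sd]{-}$, a case the classifier did \emph{not} predict. To get a real failure of cautious monotonicity I need a dataset where $\paacbr$ predicts \emph{two different} outcomes on two queries $x_1, x_2$, and where feeding back the prediction on $x_1$ disturbs the prediction on $x_2$. So I would instead engineer $D$ with (at least) two features such that: some query $x_1$ receives the default outcome $-$ because its only would-be attacker of the default is itself attacked/irrelevant, while a more specific query $x_2$ receives $+$; then adding $(x_1, -)$ as a past case inserts a new attacker into the chain above $x_2$, cutting off the attack that was making $x_2$ come out $+$, flipping it to $-$. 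Concretely, a three-feature construction along the lines of $D=\{\caseset[a]{+},\ \caseset[a,b]{-}\}$ with queries $x_1 = \{b\}$ (predicted $-$, since $\caseset[a]{+}$ cannot attack the default's attacker chain relevantly and $\{b\}\not\supseteq\{a\}$) and $x_2=\{a,b,c\}$ (predicted $-$ already — wrong direction again), shows the delicacy.

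Given this, my actual plan is: search for the minimal casebase and pair of queries making the flip happen, most likely with $D$ over three features and queries whose predicted outcomes are genuinely $+$ and where the added case is the one with outcome matching what was predicted, by arranging the attack graph so that the new past case $(x_1,+)$ becomes a \emph{new intermediate node} $\gamma$ with $\alpha_C \pg \gamma_C \pg \beta_C$ and $\gamma_o = \alpha_o$ in clause~3 of Definition~\ref{def:dear-aacbr-def}, thereby \emph{deleting} an attack edge that previously put $x_2$ into the grounded extension. Then I verify, by the explicit $G_0, G_1, \dots$ construction of the grounded extension, that $\paacbr(D,x_1)=\paacbr(D,x_2)=+$ but $\paacbr(D\cup\{(x_1,+)\}, x_2) = -$. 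The main obstacle, as indicated, is getting the graph-surgery right so that adding a \emph{predicted} (hence ``harmless-looking'') case nonetheless removes an attack via clause~3 and changes grounded-extension membership of the default; once a correct small instance is found, the verification is a routine finite computation, and I would present it with one or two small TikZ figures in the style of Figure~\ref{fig:example-legal-intro}.
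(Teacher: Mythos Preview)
Your proposal has a genuine gap: it never actually produces a counterexample. You correctly set up what is needed (a casebase $D$ and two queries such that feeding back one prediction flips the other), you try two concrete instances, you correctly diagnose that both fail, and then you stop at ``my actual plan is: search for the minimal casebase\ldots''. For a theorem of the form ``$\vdash$ is not cautiously monotonic'', the proof \emph{is} the explicit witness together with its verification; a description of what the witness ought to look like is not a proof.

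There is also a second, more subtle issue: the mechanism you settle on --- making the added case act as the intermediate $\gamma$ in requirement~3 of Definition~\ref{def:dear-aacbr-def} so as to \emph{delete} an attack edge --- is not the mechanism that actually drives the paper's counterexample, and is trickier than you suggest. If $(x_1,+)$ sits strictly between $\alpha$ (outcome $+$) and $\beta$ (outcome $-$), then yes, the attack $\alpha\leadsto\beta$ disappears, but $(x_1,+)$ will itself typically attack $\beta$ (or something between), so the attack is redirected rather than removed, and the grounded extension need not change in the direction you want. The paper's counterexample instead works by having the added case create a \emph{new} attack that reinstates an attacker of the default. Concretely: with four features, $D=\{(\{a\},+),(\{c\},+),(\{a,b\},-),(\{c,z\},-)\}$, $N_1=\{a,b,c\}$, $N_2=\{a,b,c,z\}$, one checks $\paacbr(D,N_1)=+$ (the branch through $\{c\}$ survives), $\paacbr(D,N_2)=-$ (both $+$ cases are killed by the two $-$ cases, so the default is in $\groundext$), but after adding $(\{a,b,c\},+)$ this new case attacks $(\{a,b\},-)$, which reinstates $(\{a\},+)$ and ejects the default, giving $\paacbr(D\cup\{(N_1,+)\},N_2)=+$. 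That is the missing construction your proposal was reaching for.
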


\begin{proof}
	{We show a counterexample, choosing $X = \powerset{\{a,b,c,z\}}$, $Y = \{-, +\}$, and $\pleq = \supseteq$. Define
	\(D = \{\caseset[a]{+},\) \(\caseset[c]{+},\) \(
          \caseset[a,b]{-},\) \( \caseset[c,z]{-}\}\) {and \defcase = \defcaseset\ }, and  two new cases: \(N_1 = \{a,b,c\}\) and \(N_2 = \{a,b,c,z\}\).
        }

Consider now {$\paacbr(D,N_1)$ and $\paacbr(D,N_2)$}. We can see in Figure \ref{fig:n_1} that \(D \vdash_{\paacbr} (N_1, +)\) and in Figure \ref{fig:n_2} that \(D \vdash_{\paacbr} (N_2, -)\).

Finally, let us consider {$\aaFtwo{D\cup\{\case{N_1}{+}\}}{N_2})$}
in Figure \ref{fig:n_2_alt}. We can then conclude that \(D \cup \{\case{N_1}{+}\} \vdash_{\paacbr} (N_2, +)\) even though \(D \vdash_{\paacbr} (N_1,+)\) and \(D \vdash_{\paacbr} (N_2, -)\), as required.
\end{proof}

\begin{figure*}[t!hb]
  \centering
  \begin{subfigure}[t]{0.30\textwidth}
    \centering
\begin{tikzpicture}[>=latex,line join=bevel, scale=0.8, font=\scriptsize]
  \pgfsetlinewidth{1bp}
\begin{scope}
  \pgfsetstrokecolor{black}
  \definecolor{strokecol}{rgb}{1.0,1.0,1.0};
  \pgfsetstrokecolor{strokecol}
  \definecolor{fillcol}{rgb}{1.0,1.0,1.0};
  \pgfsetfillcolor{fillcol}
  \filldraw (0.0bp,0.0bp) -- (0.0bp,127.0bp) -- (181.43bp,127.0bp) -- (181.43bp,0.0bp) -- cycle;
\end{scope}
  \pgfsetcolor{black}
\draw [->] (47.811bp,88.933bp) .. controls (53.559bp,92.423bp) and (60.279bp,96.503bp)  .. (75.443bp,105.71bp);
\draw [->] (30.587bp,58.66bp) .. controls (30.598bp,58.75bp) and (30.608bp,58.84bp)  .. (31.62bp,67.442bp);
\draw [->] (134.27bp,88.404bp) .. controls (127.63bp,92.165bp) and (119.65bp,96.688bp)  .. (103.35bp,105.93bp);
\draw [->] (151.61bp,58.66bp) .. controls (151.6bp,58.75bp) and (151.59bp,58.84bp)  .. (150.58bp,67.442bp);
\draw [->] (111.4bp,23.133bp) .. controls (116.34bp,25.844bp) and (121.7bp,28.783bp)  .. (135.87bp,36.552bp);
\begin{scope}
  \definecolor{strokecol}{rgb}{0.0,0.0,0.0};
  \pgfsetstrokecolor{strokecol}
  \definecolor{fillcol}{rgb}{0.83,0.83,0.83};
  \pgfsetfillcolor{fillcol}
  \filldraw [opacity=1] [thin] (29.1bp,46.0bp) ellipse (25.0bp and 12.5bp);
  \draw (29.098bp,46.0bp) node {$\caseset[a,b]{-}$};
\end{scope}
\begin{scope}
  \definecolor{strokecol}{rgb}{0.0,0.0,0.0};
  \pgfsetstrokecolor{strokecol}
  \draw [thin] (153.1bp,46.0bp) ellipse (25.0bp and 12.5bp);
  \draw (153.1bp,46.0bp) node {$\caseset[c,z]{-}$};
\end{scope}
\begin{scope}
  \definecolor{strokecol}{rgb}{0.0,0.0,0.0};
  \pgfsetstrokecolor{strokecol}
  \draw [thin] (89.1bp,114.0bp) ellipse (18.0bp and 12.5bp);
  \draw (89.098bp,114.0bp) node {$\defcaseset$};
\end{scope}
\begin{scope}
  \definecolor{strokecol}{rgb}{0.0,0.0,0.0};
  \pgfsetstrokecolor{strokecol}
  \draw [thin] (33.1bp,80.0bp) ellipse (20.5bp and 12.5bp);
  \draw (33.098bp,80.0bp) node {$\caseset[a]{+}$};
\end{scope}
\begin{scope}
  \definecolor{strokecol}{rgb}{0.0,0.0,0.0};
  \pgfsetstrokecolor{strokecol}
  \definecolor{fillcol}{rgb}{0.83,0.83,0.83};
  \pgfsetfillcolor{fillcol}
  \filldraw [opacity=1] [thin] (149.1bp,80.0bp) ellipse (20.0bp and 12.5bp);
  \draw (149.1bp,80.0bp) node {$\caseset[c]{+}$};
\end{scope}
\begin{scope}
  \definecolor{strokecol}{rgb}{0.0,0.0,0.0};
  \pgfsetstrokecolor{strokecol}
  \definecolor{fillcol}{rgb}{0.83,0.83,0.83};
  \pgfsetfillcolor{fillcol}
  \filldraw [thin] (127.1bp,12.0bp) -- (109.1bp,24.5bp) -- (73.1bp,24.5bp) -- (55.1bp,12.0bp) -- (73.1bp,-0.5bp) -- (109.1bp,-0.5bp) -- cycle;
  \draw (91.098bp,12.0bp) node {$\caseset[a,b,c]{?}$};
\end{scope}
\end{tikzpicture}
\caption{$\aaFtwo{D}{N_1}$}
\label{fig:n_1}
\end{subfigure}
\quad \begin{subfigure}[t]{0.30\textwidth}
  \centering
\begin{tikzpicture}[>=latex,line join=bevel, scale=0.8, font=\scriptsize]
  \pgfsetlinewidth{1bp}
\pgfsetcolor{black}
\draw [->] (62.166bp,88.756bp) .. controls (68.351bp,92.381bp) and (75.661bp,96.666bp)  .. (91.34bp,105.86bp);
\draw [->] (43.464bp,58.66bp) .. controls (43.48bp,58.75bp) and (43.496bp,58.84bp)  .. (45.014bp,67.442bp);
\draw [->] (150.4bp,88.404bp) .. controls (143.76bp,92.165bp) and (135.78bp,96.688bp)  .. (119.48bp,105.93bp);
\draw [->] (168.37bp,58.66bp) .. controls (168.35bp,58.75bp) and (168.34bp,58.84bp)  .. (167.08bp,67.442bp);
\begin{scope}
  \definecolor{strokecol}{rgb}{0.0,0.0,0.0};
  \pgfsetstrokecolor{strokecol}
  \definecolor{fillcol}{rgb}{0.83,0.83,0.83};
  \pgfsetfillcolor{fillcol}
  \filldraw [opacity=1] [thin] (105.23bp,114.0bp) ellipse (18.0bp and 12.5bp);
  \draw (105.23bp,114.0bp) node {$\defcaseset$};
\end{scope}
\begin{scope}
  \definecolor{strokecol}{rgb}{0.0,0.0,0.0};
  \pgfsetstrokecolor{strokecol}
  \draw [thin] (47.23bp,80.0bp) ellipse (20.5bp and 12.5bp);
  \draw (47.23bp,80.0bp) node {$\caseset[a]{+}$};
\end{scope}
\begin{scope}
  \definecolor{strokecol}{rgb}{0.0,0.0,0.0};
  \pgfsetstrokecolor{strokecol}
  \definecolor{fillcol}{rgb}{0.83,0.83,0.83};
  \pgfsetfillcolor{fillcol}
  \filldraw [opacity=1] [thin] (41.23bp,46.0bp) ellipse (25.0bp and 12.5bp);
  \draw (41.23bp,46.0bp) node {$\caseset[a,b]{-}$};
\end{scope}
\begin{scope}
  \definecolor{strokecol}{rgb}{0.0,0.0,0.0};
  \pgfsetstrokecolor{strokecol}
  \draw [thin] (165.23bp,80.0bp) ellipse (20.0bp and 12.5bp);
  \draw (165.23bp,80.0bp) node {$\caseset[c]{+}$};
\end{scope}
\begin{scope}
  \definecolor{strokecol}{rgb}{0.0,0.0,0.0};
  \pgfsetstrokecolor{strokecol}
  \definecolor{fillcol}{rgb}{0.83,0.83,0.83};
  \pgfsetfillcolor{fillcol}
  \filldraw [opacity=1] [thin] (170.23bp,46.0bp) ellipse (25.0bp and 12.5bp);
  \draw (170.23bp,46.0bp) node {$\caseset[c,z]{-}$};
\end{scope}
\begin{scope}
  \definecolor{strokecol}{rgb}{0.0,0.0,0.0};
  \pgfsetstrokecolor{strokecol}
  \definecolor{fillcol}{rgb}{0.83,0.83,0.83};
  \pgfsetfillcolor{fillcol}
  \filldraw [thin] (144.73bp,12.0bp) -- (124.98bp,24.5bp) -- (85.48bp,24.5bp) -- (65.73bp,12.0bp) -- (85.48bp,-0.5bp) -- (124.98bp,-0.5bp) -- cycle;
  \draw (105.23bp,12.0bp) node {$\caseset[a,b,c,z]{?}$};
\end{scope}
\end{tikzpicture}
\caption{$\aaFtwo{D}{N_2}$}
\label{fig:n_2}
\end{subfigure}
\quad \begin{subfigure}[t]{0.30\textwidth}
  \centering
\begin{tikzpicture}[>=latex,line join=bevel, scale=0.8, font=\scriptsize]
  \pgfsetlinewidth{1bp}
\pgfsetcolor{black}
\draw [->] (72.603bp,89.291bp) .. controls (77.364bp,92.465bp) and (82.807bp,96.094bp)  .. (96.45bp,105.19bp);
\draw [->] (56.156bp,58.66bp) .. controls (56.167bp,58.75bp) and (56.177bp,58.84bp)  .. (57.189bp,67.442bp);
\draw [->] (149.48bp,88.933bp) .. controls (144.15bp,92.29bp) and (137.95bp,96.194bp)  .. (123.2bp,105.48bp);
\draw [->] (166.18bp,58.66bp) .. controls (166.17bp,58.75bp) and (166.16bp,58.84bp)  .. (165.14bp,67.442bp);
\draw [->] (40.609bp,24.275bp) .. controls (40.868bp,24.674bp) and (41.129bp,25.077bp)  .. (46.957bp,34.085bp);
\begin{scope}
  \definecolor{strokecol}{rgb}{0.0,0.0,0.0};
  \pgfsetstrokecolor{strokecol}
  \draw [thin] (109.67bp,114.0bp) ellipse (18.0bp and 12.5bp);
  \draw (109.67bp,114.0bp) node {$\defcaseset$};
\end{scope}
\begin{scope}
  \definecolor{strokecol}{rgb}{0.0,0.0,0.0};
  \pgfsetstrokecolor{strokecol}
  \definecolor{fillcol}{rgb}{0.83,0.83,0.83};
  \pgfsetfillcolor{fillcol}
  \filldraw [opacity=1] [thin] (58.67bp,80.0bp) ellipse (20.5bp and 12.5bp);
  \draw (58.667bp,80.0bp) node {$\caseset[a]{+}$};
\end{scope}
\begin{scope}
  \definecolor{strokecol}{rgb}{0.0,0.0,0.0};
  \pgfsetstrokecolor{strokecol}
  \draw [thin] (54.67bp,46.0bp) ellipse (25.0bp and 12.5bp);
  \draw (54.667bp,46.0bp) node {$\caseset[a,b]{-}$};
\end{scope}
\begin{scope}
  \definecolor{strokecol}{rgb}{0.0,0.0,0.0};
  \pgfsetstrokecolor{strokecol}
  \draw [thin] (163.67bp,80.0bp) ellipse (20.0bp and 12.5bp);
  \draw (163.67bp,80.0bp) node {$\caseset[c]{+}$};
\end{scope}
\begin{scope}
  \definecolor{strokecol}{rgb}{0.0,0.0,0.0};
  \pgfsetstrokecolor{strokecol}
  \definecolor{fillcol}{rgb}{0.83,0.83,0.83};
  \pgfsetfillcolor{fillcol}
  \filldraw [opacity=1] [thin] (167.67bp,46.0bp) ellipse (25.0bp and 12.5bp);
  \draw (167.67bp,46.0bp) node {$\caseset[c,z]{-}$};
\end{scope}
\begin{scope}
  \definecolor{strokecol}{rgb}{0.0,0.0,0.0};
  \pgfsetstrokecolor{strokecol}
  \definecolor{fillcol}{rgb}{0.83,0.83,0.83};
  \pgfsetfillcolor{fillcol}
  \filldraw [opacity=1] [thin] (32.67bp,12.0bp) ellipse (29.0bp and 12.5bp);
  \draw (32.667bp,12.0bp) node {$\caseset[a,b,c]{+}$};
\end{scope}
\begin{scope}
  \definecolor{strokecol}{rgb}{0.0,0.0,0.0};
  \pgfsetstrokecolor{strokecol}
  \definecolor{fillcol}{rgb}{0.83,0.83,0.83};
  \pgfsetfillcolor{fillcol}
  \filldraw [thin] (207.17bp,12.0bp) -- (187.42bp,24.5bp) -- (147.92bp,24.5bp) -- (128.17bp,12.0bp) -- (147.92bp,-0.5bp) -- (187.42bp,-0.5bp) -- cycle;
  \draw (167.67bp,12.0bp) node {$\caseset[a,b,c,z]{?}$};
\end{scope}
\end{tikzpicture}
\caption{$\aaFtwo{D\cup \{\case{N_1}{+}\}}{N_2}$}
\label{fig:n_2_alt}
\end{subfigure}
\caption{AFs for the proof of Theorem~\ref{theo:aacbr-not-caut-mono}, with the grounded extension shaded.}
\end{figure*}
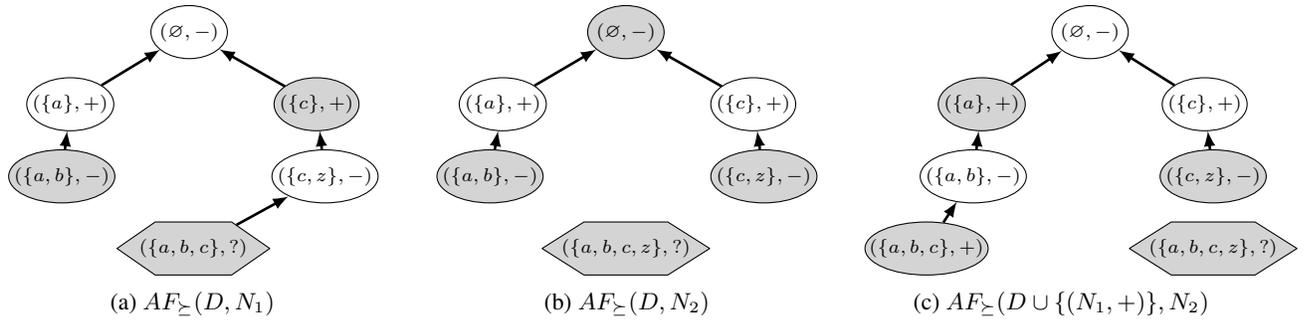

Note that the proof of Theorem~\ref{theo:aacbr-not-caut-mono} shows that the inference relation drawn from the original \aacbr\ (i.e. \oaacbr) is also non-cautiously monotonic, given that the proof's counterexample is obtained with \oaacbr. Also, note that the counterexample amounts to an expansion of Example~\ref{running-example}, as follows.

\begin{example} (Example~\ref{running-example} cont.)
  \label{running-example-2} 
Assume that a different type of crime happened: public offending someone's honour, which we will call defamation (\emph{df}). In one case, it was established that the defendant did publicly damage someone's honour, and was considered guilty \(\case{\{df\}}{+}\). In a subsequent case, even if proven that the defendant did hurt someone's honour, it was established that this was done by a true allegation, and thus the case was dismissed, represented as \(\case{\{df,td\}}{-}\).
What happens, then, if a same defendant is:
1. simultaneously proven guilty of homicide, of defamation, but shown to have committed the homicide in self-defence ($\caseset[hm,df,sd]{?}$); or
2. simultaneously proven guilty of homicide, of defamation, shown to have committed the homicide in self-defence, and  also shown to have committed defamation by a true allegation ($\caseset[hm,df,sd,td]{?}$)?

  {We can map these situations to our counterexample in the proof of Theorem~\ref{theo:aacbr-not-caut-mono} by setting $a = hm$, $b = sd$, $c = df$, and $z = td$.
  The first question is answered by the AF represented in Figure \ref{fig:n_1}, with outcome $+$, that is, the defendant is considered guilty.}
The proof of Theorem~\ref{theo:aacbr-not-caut-mono} shows that the answer to the second question in $\paacbr$ would depend on whether the case in the first question was already judged or not. If not, then the cases $\caseset[hm,sd]{-}$ and $\caseset[df,td]{-}$ would be the nearest cases, and the outcome would be $-$, that is, not guilty. However, if the case in the first question was already judged and incorporated into the case law, it would serve as a counterargument for $\caseset[hm,sd]{-}$, and guarantee that the outcome is $+$ (guilty). Intuitively, this seems strange, in particular, because the order in which the case in the first answer is judged affects the case in the second question.
\end{example}

{This example aims only to illustrate an interpretation in which $\paacbr$ operates seemingly inappropriately. Whether this behaviour of $\oaacbr$ in particular is desirable depends on other elements such as, for $\paacbr$, the interrelation between the characterisations and the partial order.}

\section{A cumulative \texorpdfstring{$\bm{\pAACBR}$}{$\pAACBR$}}
\label{sec:org432032f}

We will now present {\caacbr, a novel, \emph{cumulative} incarnation of \aacbr} 
satisfying cautious monotonicity.

\paragraph{{\bf Concise dataset.}} 
Firstly, {let us present some general notions, defined in terms of the $\cinfers$ inference relation} from an arbitrary classifier $\learn$.
Intuitively, we {are after} a relation \(\cinfers'\) such that if \(D \cinfers c\) and \(D \cinfers d\), then \(D\cup\{c\} \cinfers' d\) {(in our concrete setting, $\cinfers=\infers_{\paacbr}$ and $\cinfers'=\infers_{\caacbr}$)}. We also want the property that, whenever \(D\) is ``well-behaved'' (in a sense to be made precise later), \(D \cinfers {s}\) iff \(D \cinfers' {s}\). In this way, given that \(D \cinfers' c\) and \(D \cinfers' d\), then we would conclude \(D\cup\{c\} \cinfers' d\), making \(\cinfers'\) a cautiously monotonic relation.
We will {define $\cinfers'$} by building a subset of the original dataset 
in such a way that cautious monotonicity is preserved. We start with the following {notion of \emph{\includable} {examples}}:
  
\begin{definition} \label{def:includable}
An example $(x,y) \in X \times Y$ is \emph{\surprising} \wrt\ $D$ iff $D\setminus\{(x,y)\} \not \cinfers (x,y)$ and \emph{sufficient} {\wrt} $D$ iff $D \cup \{(x,y)\} \cinfers (x,y)$. Additionally, an example $(x,y) \in X \times Y$ is \emph{\includable} \wrt\ $D$ iff it is both {\surprising} and {\sufficient} {\wrt} to $D$.
\end{definition}
The definition of {\includable} example has two parts: that the example is {\surprising}, in the sense that, without it, the predicted outcome would be different, and that it is {\sufficient}, in the sense that adding it makes it inferable.  We then define the notion of \emph{concise} subsets, {amounting to {\includable} examples only:} 

\begin{definition} \label{def:concise}
	Let $S \subseteq X \times Y$ be a dataset, 
$S' \subseteq S$, and let $\phi(S') = \{(x,y) \in S \mid (x,y) \text{ is {\includable} \wrt\ } S' \}$.  Then $S'$ is \emph{concise \wrt\ $S$}  whenever it is a fixed point of $\phi$, that is, $\phi(S') = S'$. \end{definition}
To illustrate {in the context of \aacbr}, consider {$S$ from which the AF} in Figure \ref{fig:n_2_alt} {is drawn}. $S$ is not concise {\wrt\ itself}, since $\caseset[a,b,c]{+}$ is not {\includable} \wrt\ $S$ {(indeed, $S\setminus \{\caseset[a,b,c]{+}\} \infers_{\paacbr} (\{a,b,c\},+)$, see Figure~\ref{fig:n_1})}. Also, $S' = S \setminus \{\caseset[a,b]{-}, \caseset[a,b,c]{+}\}$ {is not concise either (\wrt\ $S$)}, as $\caseset[a,b]{-}$ is {\includable} \wrt\ $S'$ (the predicted outcome being $+$), but not an element of $S'$. The only concise subset of $S$ here is $S'' = S \setminus \{\caseset[a,b,c]{+}\}$.

Let us now consider $D'\subseteq D$, for a dataset $D$. If \(D'\) is concise \wrt\ \(D\), \((x,y) \in (X\times Y) \setminus D\) is an example not in $D$ already and \(D' \cinfers (x,y) \), then \((x,y)\) is not {\includable} \wrt\ \(D'\), and thus \(D'\) is still concise \wrt\ \(D \cup \{(x,y)\}\). 
Now, \emph{suppose that there is exactly one such concise} $D'\subseteq D$ {\wrt\ $D$}  {(let us refer to this subset simply as $concise(D)$).}
Then, it seems attractive to define \(\cinfers'\) as: \({D} \cinfers' (x,y)\) iff \(concise({D}) \cinfers (x,y)\). Such \(\cinfers'\) inference relation would then be cautiously monotonic if \(concise({D}) = concise({D} \cup \{(x,y)\})\). To see that, consider \((x',y') \in (X\times Y)\) such that \(D \cinfers' (x',y')\). Then, since \(concise({D}) = concise({D} \cup \{(x,y)\})\), for our new $\cinfers'$, \(D\) and \(D \cup \{(x,y)\}\) would infer the exact same sentences, thus \(D \cup \{(x,y)\} \cinfers' (x',y')\).
This equality is indeed guaranteed given that $(x,y) \not \in D$, thus it is not {\includable}, and then a concise subset of \({D}\) is still a concise subset of \({D} \cup \{(x,y)\}\) (otherwise, if \((x,y) \in D\), the equality would be trivial).
Note that concision is too strong a property here: all that is needed is that a subset $D'$ is selected such that every case in it is surprising \wrt\ $D'$ itself. However, concision implies that as many cases are added as possible, while restricting to the ones that guarantee their outcomes.

In the remainder, we state uniqueness and 
give an algorithm that constructs $concise(D)$, in the case of $\PAACBR$. If $D$ is in{\coherent}, there might be no concise subset thereof, but our method will still be useful, as we discuss later.

\paragraph{{\bf Uniqueness and algorithm.}} We first give a property of concise subsets:
\begin{theorem} \label{theo:unique-concise}
  For $\paacbr$, if there is
	a concise $D' \! \subseteq \! D$ \wrt\ $D$ then every concise subset of $D$ \wrt\ $D$ is the same as $D'$. 
	
\end{theorem}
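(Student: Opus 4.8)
Let $D_1',D_2'\subseteq D$ both be concise \wrt\ $D$, i.e.\ both fixed points of the operator $\phi$ of Definition~\ref{def:concise} taken \wrt\ $D$. The plan is to show $D_1'=D_2'$ by a well-founded induction along the specificity order $\pl$ restricted to the (finite) set of characterisations occurring in $D$. Writing $C_x=\{(\gamma_C,\gamma_o)\in D\mid \gamma_C=x\}$, I will assume for contradiction that $D_1'$ and $D_2'$ differ, pick a $\pl$-minimal characterisation $x$ at which they disagree (i.e.\ $D_1'\cap C_x\neq D_2'\cap C_x$), and derive a contradiction. The engine of the argument is a \emph{locality} property of $\paacbr$.

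\textbf{Locality lemma.} First I would establish that for every casebase $S$ and every $x$, $\paacbr(S,x)=\paacbr(S_{\pleq x},x)$, where $S_{\pleq x}=\{(\gamma_C,\gamma_o)\in S\mid \gamma_C\pleq x\}$. Using well-behavedness (item~1: $\not\sim$ is $\not\pgeq$; item~2: $\defcharac$ is least), in $\aaFtwo{S}{x}$ the new case $(x,?)$ is unattacked and attacks exactly those $\beta$ with $\beta_C\npleq x$; so these $\beta$ never enter the grounded extension, and moreover, since any $\pl$-intermediate of a case with characterisation $\pleq x$ again has characterisation $\pleq x$, deleting them leaves the attack relation among the remaining arguments (including $\defcase$) unchanged. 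One then checks that $\{(x,?)\}\cup\groundext(\aaFone{S_{\pleq x}})$ is a complete extension of $\aaFtwo{S}{x}$ and, conversely (by induction on the grounded construction of $\aaFone{S_{\pleq x}}$, using that $(x,?)$ defends against all the deleted attackers), that $\groundext(\aaFone{S_{\pleq x}})\subseteq\groundext(\aaFtwo{S}{x})$; hence $\defcase$ lies in one grounded extension iff in the other. Since the new case is isolated in $\aaFtwo{S_{\pleq x}}{x}$, this gives the claim. A direct consequence is that whether $(x,y)\in D$ is \includable\ \wrt\ a set $S'$ depends only on $S'_{\pleq x}$: surprisingness of $(x,y)$ \wrt\ $S'$ amounts to $\paacbr(S'_{\pleq x}\setminus\{(x,y)\},x)\neq y$ and sufficiency to $\paacbr(S'_{\pleq x}\cup\{(x,y)\},x)=y$.

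\textbf{The inductive step.} Returning to the $\pl$-minimal bad $x$: by minimality $D_1'$ and $D_2'$ agree on every case of characterisation $\pl x$, so $D_1'_{\pl x}=D_2'_{\pl x}=:B$ (with $S'_{\pl x}$ the cases of $S'$ of characterisation strictly below $x$). Next I would note that a concise set cannot contain both $(x,\defoutcome)$ and $(x,\nondefoutcome)$: if it did, both are sufficient, and being elements of the fixed point this forces $\paacbr(D_i',x)=\defoutcome$ and $\paacbr(D_i',x)=\nondefoutcome$, contradicting that $\paacbr$ is a function (equivalently, consistency of $\inferspaacbr$, Theorem~\ref{theo:compl}). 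Hence each $D_i'\cap C_x$ has at most one element; as they differ and $Y$ has two elements, one of them, say $D_1'\cap C_x=\{(x,o)\}$, is nonempty. Then $D_1'_{\pleq x}=B\cup\{(x,o)\}$, and \includability\ of $(x,o)$ \wrt\ $D_1'=\phi(D_1')$ gives $\paacbr(B,x)\neq o$ and $\paacbr(B\cup\{(x,o)\},x)=o$. Now: (i) if $D_2'\cap C_x=\emptyset$ then $D_2'_{\pleq x}=B$, so $(x,o)$ is surprising (since $\paacbr(B,x)\neq o$) and sufficient (since $\paacbr(B\cup\{(x,o)\},x)=o$) \wrt\ $D_2'$, hence \includable, hence $(x,o)\in\phi(D_2')=D_2'$, a contradiction; (ii) if $D_2'\cap C_x=\{(x,o')\}$ with $o'\neq o$, the same derivation applied to $D_2'$ yields $\paacbr(B,x)\neq o'$, so $\paacbr(B,x)\notin\{o,o'\}=Y$, impossible. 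Either way we contradict the choice of $x$, so $D_1'=D_2'$.

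\textbf{Anticipated main obstacle.} The only genuinely delicate point is the locality lemma in the in\coherent\ case, where $\aaFone{S}$ need not be well-founded and may contain $2$-cycles between cases sharing a characterisation: one has to argue carefully that ``dead'' arguments attacked by the new case, even when they themselves attack surviving arguments (including $\defcase$), do not perturb the grounded extension on the surviving part. Once locality and the ``no two coincident cases in a concise set'' observation are in place, the remaining induction is routine.
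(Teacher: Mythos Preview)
Your proposal is correct and follows the same high-level strategy as the paper's own proof: assume two concise subsets, pick a $\pl$-minimal point of disagreement, invoke a locality property (your locality lemma is equivalent to the paper's Lemma~\ref{lemma:lesser}, and your complete-extension argument for it is a valid alternative to the paper's mutual $G_i$-inclusion proof), observe that a concise set cannot contain both $(x,\defoutcome)$ and $(x,\nondefoutcome)$, and then derive a contradiction by case analysis on what the other concise set contains at characterisation $x$.

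The one noteworthy difference is in your case~(ii), where each of $D_1',D_2'$ contains a single case at characterisation $x$ but with opposite outcomes. You apply the surprisingness condition symmetrically to both sets, obtaining $\paacbr(B,x)\neq o$ and $\paacbr(B,x)\neq o'$, which is immediately absurd since $Y=\{o,o'\}$. The paper instead argues that adjoining the opposite case to $D'$ creates an incoherent pair at $x$, forcing the non-default outcome, whence $(x,\bar y)$ is includable \wrt\ $D'$ yet absent from it. Your route is cleaner and sidesteps having to analyse the grounded extension in the presence of the $2$-cycle at $x$; the paper's route makes explicit contact with how $\paacbr$ behaves under incoherence (which it exploits elsewhere), but requires that additional observation. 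Both are sound.
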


\SetAlgoSkip{}
\DecMargin{0.4em}
\SetInd{0.5em}{0.6em}
\begin{algorithm}[t]
  \KwIn{An $\PAACBR$ framework $\AF{\Args}{\attacks}$ and a case $n = \fullcase{n}$}
  \KwOut{A new $\PAACBR$ $\AF{\Args'}{\attacks'}$ framework}

  $DEF \leftarrow \{(x,y) \in \aaFtwo{Args}{ \charac{n}} \mid $ \\
  \hspace{5em} $(x,y) \neq \newcasearg[n]$ and $\newcasearg[n]$ defends \\
  \hspace{5em} $(x,y)$ in $\aaFtwo{Args}{ \charac{n}} \}$\;

  $\Args' \leftarrow \Args \cup \{n\}$\;
  $\attacks' \leftarrow (\attacks \cup \{ (n,a) \mid a = \fullcase{a}, a \in DEF, $\\
  \hspace{8em}\text{ and } $a_o \neq n_o \})$\;
\KwRet{$\AF{\Args'}{\attacks'}$}
  
\caption{$simple\_add$ for $\PAACBR$.}
  \label{algo:simpleadd}
\end{algorithm}
\begin{algorithm}[t]
  \KwIn{A dataset $D$}
  \KwOut{A subset $D'$ of $D$, an AF $\cAACBR(D)$}

  $unprocessed \leftarrow$ $D$\;
  $Args \leftarrow \{\defcase\}$\;
  $\attacks \leftarrow \emptyset$\;

  \While{$unprocessed \neq \emptyset$}{
$stratum \leftarrow \{(x,y) \in unprocessed \mid$ \\
    \hspace{4.0em}$(x,y)$\text{ is } $\pleq\!\text{-minimal}$\text{ in }$ unprocessed\}$\;
    $unprocessed \leftarrow unprocessed \setminus stratum$\;
    $to\_add \leftarrow \emptyset$\;
    \For{$case \in stratum$}{
      $(case\_charac, case\_outcome) \leftarrow case$\;
      \If{the outcome for $case\_charac$ \wrt\ $(\Args, \attacks)$\\
        \hspace{1em}is not $case\_outcome$}{
        $to\_add \leftarrow to\_add \cup \{case\}$\;}}
      
    \For{$case \in to\_add$}
    {${(Args,\attacks)}\leftarrow$$simple\_add({(Args,\attacks)},case)$\;}}
  $D' \leftarrow Args \setminus \{\defcase\}$\;
  \KwRet{$D', (Args,\attacks)$}

\caption{Setup/learning for $\cAACBR$.}
  \label{algo:cumulaacbr}
\end{algorithm}
{The procedure for finding this unique $concise(D)$, if it exists, is integrated within Algorithm \ref{algo:cumulaacbr}, using in turn Algorithm~\ref{algo:simpleadd}.
If $concise(D)$ does not exist, the algorithm will still return some $D' \subseteq D$ consisting only of  {\surprising} examples {\wrt} $D'$.
\gppnew{In fact, Algorithm \ref{algo:cumulaacbr} returns both this subset and its corresponding AF, in order to make implementation more straighforward.}
The main idea {behind the algorithm is} simple: we start with the default argument, and progressively build the AF by adding cases from {$D$} by following the partial order $\pleq$. Before adding a past case, we test whether it is {\includable} or not \wrt\ {the dataset underpinning the current} AA framework: if it is, then it is added; otherwise, it is not.
{More precisely, the algorithm works with strata over $D$, alongside $\pleq$.}
{In the simplest setting where each stratum is a singleton, the algorithm works as follows:} starting with \(D_0 = \emptyset\) and the entire dataset $D = \{d_i\}_{i \in \{1, \dots, \card{D}\}}$ unprocessed, at each step $i$, we {obtain} either \(D_{i} = D_{i-1} \cup \{d_{i}\}\), if \(d_{i}\) is {\includable} \wrt\ \(D_{i-1}\), and \(D_{i} = D_{i-1}\), otherwise. 
Then \(\hat{D} = D_{\card{D}} \subseteq D\) is the (implicit) result of the algorithm. 
Each example {of the current stratum} is tested for ``{\includability}'' with respect to the same (current) subset $D_i$, and only the {\includable} examples are added to it. Here, however, testing for {\surprise} is enough for this verification.  
We illustrate the application of the algorithms next.

\begin{example}

  Once more consider the dataset \(D = \{\caseset[a]{+}, \caseset[c]{+}, \caseset[a,b]{+},\) \(\caseset[c,z]{+}, \caseset[a,b,c]{+}\}\) in Figure \ref{fig:n_2_alt}, as well as the definitions used in the proof of Theorem~\ref{theo:aacbr-not-caut-mono} for $X$, $Y$, $\defcase$ and $\pleq$. Let us examine the application of Algorithm \ref{algo:cumulaacbr} to it. We start with an AA framework $AF_0$ consisting only of $\defcase$, that is, $D_0 = \emptyset$, $AF_0 = \aaFone{D_0} = \aaFone{\emptyset} = \AF{\{\defcaseset\}}{\emptyset}$. The first stratum would consist of $stratum_1 = \{\caseset[a]{+},\caseset[c]{+}\}$. Of course, then, we have $\paacbr(\{\defcaseset\},$ $\set{a}) = -$, and similarly for $\caseset[c]{?}$. Thus, every argument in $stratum_1$ is {\includable}, and are then included in the next $AF$, resulting in $D_1 = \caseset[a]{+},\caseset[c]{+}$ and $AF_1 = \aaFone{D_1}$.
{Now, the second stratum is $stratum_2 = \{\caseset[a,b]{-},\caseset[c,z]{-}\}$. We can verify that $\paacbr(D_1, \set{a,b}) = +$ and $\paacbr(D_1, \set{c,z}) = +$. As a result $\caseset[a,b]{-}$ and $\caseset[c,z]{-}$ are both {\includable}, and then included in next step, that is, $D_2 = D_1 \allowbreak\cup\allowbreak \{\caseset[a,b]{-},\allowbreak\caseset[c,z]{-}\}$, and $AF_2 = \aaFone{D_2}$.}
Finally, $stratum_3 = \{\caseset[a,b,c]{+}\}$. Now we verify that $\caseset[a,b,c]{+}$ is \emph{not} {\includable}, because \(\paacbr(D_2,\set{a,b,c}) = +\). Therefore it is \emph{not} added to the AA framework, that is, $D_3 = D_2$ and thus $AF_3 = \aaFone{D_3} = \aaFone{D_2} = AF_2$. Now $unprocessed = \emptyset$, and the selected subset is $D_3$, with corresponding $\aaFone{D_3} = AF_3$, and we are done. 
	Note	that using $\caacbr$ the counterexample in the proof of Theorem~\ref{theo:aacbr-not-caut-mono} would fail, since $\caseset[a,b,c]{+}$ would not have been added to the AF.
\end{example}

Note that\ifincludeincoherent, if $D$ is \coherent, \fi we could have defined the algorithm equivalently by looking at cases one-by-one rather than grouping them in strata.
However, using strata still has the advantage of allowing for parallel testing of new cases. 
\ifincludeincoherent
If $D$ is in\coherent, then using strata is necessary.
\fi

A full complexity analysis of the algorithm is outside the scope of this paper. However, note here that the algorithm refrains from building the AA framework from 
scratch each time a new case is considered.
Still regarding Algorithm~\ref{algo:simpleadd}, note that it is
easy to compute the set DEF while checking whether the next case
is {\includable} or not,  thus we could optimise its implementation
with the use of caching. Besides, the subset of minimal cases (that is, the stratum)
can be extracted efficiently by representing the partial order
as a directed acyclic graph and traversing this graph.
Finally, the order in which the cases in the same stratum are added does not affect the outcome.
Thus, each case in the same stratum can be safely tested for {\includability} in parallel.

\paragraph{\texorpdfstring{$\bm{\cAACBR}$.}{$\cAACBR$.}}
\begin{theorem} \label{theo:existence-concise}
Let $D'$ be the dataset \gppnew{returned by} Algorithm \ref{algo:cumulaacbr}. Then for every \(\casei \in D'\), \(\casei\) is surprising {\wrt} $D'$. Additionally, if $D$ has a concise subset, $D'$ is its unique concise subset. In particular, there is always a concise subset if $D$ is {\coherent}. 
\end{theorem}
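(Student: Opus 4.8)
The plan is to establish the three assertions in turn --- \textbf{(a)} every $\alpha\in D'$ is \surprising\ \wrt\ $D'$; \textbf{(b)} if $D$ admits a concise subset then $D'$ equals it, so that (with Theorem~\ref{theo:unique-concise}) $D'$ is the unique concise subset of $D$; \textbf{(c)} a concise subset exists whenever $D$ is \coherent{} --- on top of three facts about \paacbr\ and the algorithm, plus some bookkeeping about the stratification. The facts are: \textbf{(i)} an invariant of Algorithm~\ref{algo:cumulaacbr}: when stratum $k$ is reached, the AF the algorithm holds yields, for every characterisation $m$, the same outcome as $\paacbr(D_{<k},m)$, where $D_{<k}$ denotes the cases added while processing the strata before the $k$-th; this is what the \texttt{simple\_add} routine of Algorithm~\ref{algo:simpleadd} is designed to secure ($\mathit{DEF}$ computing exactly the attacks a freshly added case exerts in a freshly mined AF), and it is proved by induction over strata, using that cases already present are never $\pleq$-strictly above, and never share a characterisation with, the case being added, so they can neither attack it nor block an attack among the others via clause~3 of the attack definition. \textbf{(ii)} Relevance locality: $\paacbr(E,m)=\paacbr(\{c\in E\mid c_C\pleq m\},m)$ --- because in the mined AF the new case is unattacked, hence in $\groundext$, and attacks every case whose characterisation is not $\pleq m$, while no such case can lie $\pleq$-strictly between two $m$-relevant cases and so cannot block an attack among them. \textbf{(iii)} Sufficiency of the nearest case: for \coherent\ $E$ and a case $(m,o)$ with $(m,\overline{o})\notin E$, $\paacbr(E\cup\{(m,o)\},m)=o$ --- this is (part of) the omitted result relating new cases to nearest past cases, obtained by a dispute-tree argument around the now-unique most specific relevant case $(m,o)$. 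The bookkeeping: two cases with equal characterisation always fall in one stratum; a case added in some stratum has characterisation incomparable to, or $\pleq$-strictly above, every case processed later; and, consequently, $D'$ is always \coherent\ (two cases of one stratum with equal characterisation but opposite outcomes cannot both be added, since each would require the single-valued outcome for that characterisation \wrt\ the same AF to equal its own outcome).

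For \textbf{(a)}: if $\alpha$ is added while processing stratum $k$, then by~(i) $\paacbr(D_{<k},\alpha_C)\neq\alpha_o$. Every case in $D'\setminus\{\alpha\}$ not already in $D_{<k}$ lies in stratum $k$ --- so, being $\neq\alpha$, has characterisation incomparable to $\alpha_C$ --- or in a later stratum --- so has characterisation incomparable to, or strictly above, $\alpha_C$; in particular none has characterisation $\pleq\alpha_C$. Hence by~(ii), $\paacbr(D'\setminus\{\alpha\},\alpha_C)=\paacbr(D_{<k},\alpha_C)\neq\alpha_o$, i.e.\ $D'\setminus\{\alpha\}\not\inferspaacbr\alpha$, so $\alpha$ is \surprising\ \wrt\ $D'$. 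For \textbf{(c)}: the algorithm terminates ($\mathit{unprocessed}$ strictly shrinks each iteration, as a finite poset has minimal elements), so $D'$ is well defined, and I would show $\phi(D')=D'$. For $D'\subseteq\phi(D')$: each $\alpha\in D'$ is \surprising\ \wrt\ $D'$ by~(a), and \sufficient\ \wrt\ $D'$ because by~(ii) the $\alpha_C$-relevant part of $D'$ is $D_{<k}\cup\{\alpha\}$ with $\alpha$ its most specific member, so by~(iii) $\paacbr(D',\alpha_C)=\alpha_o$; and $\alpha\in D$. For $\phi(D')\subseteq D'$: a case $(x,y)\in D\setminus D'$ was examined and rejected at its stratum $k$, so $\paacbr(D_{<k},x)=y$; by~(ii), using \coherence\ of $D$ so that the $x$-relevant part of $D'$ equals that of $D_{<k}$, $\paacbr(D',x)=y$, hence $D'\setminus\{(x,y)\}=D'\inferspaacbr(x,y)$, so $(x,y)$ is not \surprising, hence not \includable, \wrt\ $D'$. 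Thus $D'$ is concise \wrt\ $D$.

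For \textbf{(b)}: let $C\subseteq D$ be concise; then $C$ is \coherent\ (else \sufficiency\ of both $(x,y),(x,\overline{y})\in C$ would force $\paacbr(C,x)$ to equal $y$ and $\overline{y}$ at once). I would prove by induction on strata that the cases the algorithm adds through the strata before the $k$-th are exactly $C$ restricted to those strata; granting this, $D'=C$, and Theorem~\ref{theo:unique-concise} makes $D'$ the unique concise subset of $D$. In the step, the candidate set at stratum $k$ is $A_k=\{c\in \mathit{stratum}_k\mid\paacbr(D_{<k},c_C)\neq c_o\}$, and I claim $A_k=C\cap \mathit{stratum}_k$. For ``$\supseteq$'': for $c\in C\cap \mathit{stratum}_k$, $c$ being \surprising\ \wrt\ $C$ together with~(ii) (using \coherence\ of $C$, so the $c_C$-relevant part of $C\setminus\{c\}$ is that of $D_{<k}$, via the induction hypothesis) gives $\paacbr(D_{<k},c_C)\neq c_o$. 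For ``$\subseteq$'': a $c\in \mathit{stratum}_k\setminus C$ is not \includable\ \wrt\ $C$ (as $\phi(C)=C$); if $c$ is not \surprising\ \wrt\ $C$ then $\paacbr(C,c_C)=c_o$, and since $(c_C,\overline{c_o})\in C$ would --- by its \sufficiency\ \wrt\ $C$ --- force $\paacbr(C,c_C)=\overline{c_o}$, we get $(c_C,\overline{c_o})\notin C$, whence~(ii) gives $\paacbr(D_{<k},c_C)=c_o$, so $c\notin A_k$; if instead $c$ is not \sufficient\ \wrt\ $C$, then by~(iii) $(c_C,\overline{c_o})\notin C$ is impossible (else $c$ would be the unique most specific relevant case of the \coherent\ $C\cup\{c\}$, forcing $\paacbr(C\cup\{c\},c_C)=c_o$), so $(c_C,\overline{c_o})\in C\cap \mathit{stratum}_k\subseteq A_k$ by ``$\supseteq$'', forcing $\paacbr(D_{<k},c_C)=c_o$ and again $c\notin A_k$.

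The step I expect to be hardest is fact~(iii), the nearest-case/\sufficiency\ property: it requires a genuine grounded-extension (dispute-tree) argument showing that the unique most specific relevant case governs membership of $\defcase$. Facts~(i) and~(ii) and the stratification bookkeeping are comparatively routine (the only delicate point in~(i) being a benign discrepancy at the default characterisation $\defcharac$ that does not affect outcomes), and if~(i)--(iii) are cited from the earlier development, what remains is exactly the combinatorial argument above.
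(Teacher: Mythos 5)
Your proposal is correct, and its supporting facts line up with the paper's own auxiliary results: your fact~(i) is Theorem~\ref{theo:correctness-add} (via Lemma~\ref{lemma:attacked-entering-case}), your fact~(ii) is Lemma~\ref{lemma:lesser}, and your fact~(iii) is the nearest-case result, Theorem~\ref{theo:nearest_neighbours}, specialised to the unique most specific relevant case. Parts (a) and (c) of your argument essentially coincide with the paper's (the paper phrases (c) as a running loop invariant --- ``the current set is concise \wrt\ the seen examples'' --- where you verify $\phi(D')=D'$ once at the end, but the content is the same); you are also right that sufficiency of the added cases is the step the algorithm does not test explicitly and that it genuinely needs the nearest-case theorem, a dependence the paper leaves implicit in the remark that ``testing for surprise is enough.'' Where you genuinely diverge is part (b): the paper argues by contradiction, taking a $\pleq$-minimal element of the symmetric difference between $D'$ and a concise $D''$ and showing its relevant neighbourhoods coincide (mirroring the proof of Theorem~\ref{theo:unique-concise}), whereas you run a constructive induction over strata proving $A_k=C\cap \mathit{stratum}_k$. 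Your route is longer in its case analysis (the non-\surprising\ / non-\sufficient\ split for rejected cases, and the coherence bookkeeping for the in\coherent{} stratum) but it yields more: it shows the algorithm literally reconstructs $C$ stratum by stratum, rather than merely that $D'$ cannot differ from $C$. Both are valid; the paper's is shorter because it can reuse the symmetric-difference machinery already set up for uniqueness.
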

We cannot generalise the existence result for any $D$: consider the (in{\coherent}) counterexample when $D = \{\caseset[a]{+},$ $ \caseset[a,b]{-}, \caseset[a,b]{+}\}$, for $\oaacbr$. None of its subsets is concise. Still, our algorithm returns the subset $\{\caseset[a]{+},$ $ \caseset[a,b]{-}\}$, which is coherent and consisting only of surprising examples.

To conclude, we can then define inference in $\caacbr$, the classifier yielded by the strategy described until now:
\begin{definition}
  Let $D$ be a dataset and $D'$ be the subset of $D$ identified by Algorithm \ref{algo:cumulaacbr}. Let $\caaDN$ be the AF mined from $D'$ and $\newcasearg$, with default argument $\defcase$.  Then, $\caacbr(D,\newcasecharac)$ stands for the outcome for $\newcasecharac$, given $\caaDN$. 
\end{definition}
Thus, we directly obtain the inference relation $\infers_{\caacbr}$.
Then, \caacbr\ amounts to the form of \aacbr\ using this inference relation. It is easy to see, in line with the discussion before Theorem~\ref{theo:unique-concise} and using Theorem~\ref{theo:cm-cut}, that \caacbr\ satisfies several non-monotonicity properties, as follows:

\begin{theorem}
  $\infers_{\caacbr}$ is cautiously monotonic and also satisfies cut, cumulativity, and rational monotonicity.
\end{theorem}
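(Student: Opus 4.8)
The plan is to reduce everything to cautious monotonicity of $\infers_{\caacbr}$ and then invoke the already-proven collapse result. First I would establish cautious monotonicity directly, following the informal argument sketched before Theorem~\ref{theo:unique-concise}. The key fact to pin down is that for \emph{any} dataset $D$ and any $(x,y) \in X \times Y$, Algorithm~\ref{algo:cumulaacbr} returns the same subset on input $D$ and on input $D \cup \{(x,y)\}$ whenever $D \infers_{\caacbr} (x,y)$. Concretely, let $D' = concise(D)$ be the subset returned on input $D$ (which by Theorem~\ref{theo:existence-concise} consists only of surprising examples \wrt\ $D'$, and is the unique concise subset when one exists). The claim $D \infers_{\caacbr} (x,y)$ unfolds to $D' \infers_{\paacbr} (x,y)$, i.e.\ $\paacbr(D', x) = y$. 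I would then argue that $(x,y)$ is not surprising \wrt\ $D'$: if $(x,y) \notin D'$ this is immediate since $D' \setminus \{(x,y)\} = D' \infers_{\paacbr} (x,y)$; if $(x,y) \in D'$ then $D \cup \{(x,y)\} = D$ up to the one element and conciseness/surprisingness is inherited trivially. Hence running the algorithm on $D \cup \{(x,y)\}$ produces exactly $D'$ again — the new element is screened out at its stratum precisely because it is not surprising \wrt\ the subset accumulated so far, which (by the order-independence and stratum-by-stratum construction) coincides with $D'$ restricted to the relevant strata. Therefore $concise(D) = concise(D \cup \{(x,y)\})$, so $D$ and $D \cup \{(x,y)\}$ yield identical inference relations, which immediately gives cautious monotonicity: if additionally $D \infers_{\caacbr} (x',y')$, then $D \cup \{(x,y)\} \infers_{\caacbr} (x',y')$.

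Having cautious monotonicity, the remaining three properties come for free from Theorem~\ref{theo:cm-cut} (equivalently Theorem~\ref{theo:rm}). That theorem is stated for the inference relation $\infers_{\learn}$ drawn from an \emph{arbitrary} classifier $\learn \colon \powerset{(X \times Y)} \times X \to Y$, and by Definition~\ref{def:non-monot-analysis} $\caacbr(D, \cdot)$ is such a classifier — it is a total function, since for any $D$ the algorithm returns some $D' \subseteq D$ and $\paacbr(D', \newcasecharac)$ is always defined. So $\infers_{\caacbr}$ falls under the scope of Theorem~\ref{theo:cm-cut}: parts~3, 4, 5 give that cautious monotonicity is equivalent to cut, to cumulativity, and to rational monotonicity respectively. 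Chaining these equivalences with the cautious monotonicity just established yields that $\infers_{\caacbr}$ satisfies all four properties.

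The main obstacle is the first step — rigorously justifying that the algorithm's output is stable under adding an already-inferable example. The subtle point is that Algorithm~\ref{algo:cumulaacbr} processes cases stratum by stratum along $\pleq$, and one must check that inserting $(x,y)$ into $D$ does not perturb the accumulated subset \emph{before} $(x,y)$'s own stratum is reached (it cannot, since $(x,y)$ is $\pleq$-minimal in $unprocessed$ no earlier than its stratum, and its characterisation does not appear among strictly smaller characterisations in a way that changes outcomes), and that at $(x,y)$'s stratum the includability test for $(x,y)$ fails — which is exactly the non-surprisingness established above, together with the remark that at that point "testing for surprise is enough". One should also handle the incoherent case gracefully: when $D$ has no concise subset, $D'$ is merely the surprising-only subset returned by the algorithm, but the stability argument ($concise$-style fixed point not needed) still goes through because the algorithm is deterministic and the non-surprisingness of $(x,y)$ \wrt\ the returned subset is what blocks re-inclusion. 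Modulo these bookkeeping checks, which mirror the discussion already given in the text preceding Theorem~\ref{theo:unique-concise}, the proof is short.
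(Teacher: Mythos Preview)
Your proposal is correct and follows essentially the same approach as the paper, which does not give a formal proof but simply points to the informal discussion preceding Theorem~\ref{theo:unique-concise} together with Theorem~\ref{theo:cm-cut}. You have fleshed out that sketch in the intended way: establish that the algorithm's output is unchanged when an already-inferable example is added (hence cautious monotonicity), then invoke the collapse in Theorem~\ref{theo:cm-cut} to obtain cut, cumulativity, and rational monotonicity; your explicit handling of the incoherent case (where no concise subset exists and one must argue stability of the algorithm's output directly rather than via uniqueness of concise subsets) is a useful addition that the paper's discussion leaves implicit.
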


\ifincludeincoherent
\paragraph{{\bf In{\coherence}.}}
\label{sec:org1365df6}
An important additional property of $\cAACBR$ is that it naturally accommodates a way to handle in{\coherence}s in the dataset. During the execution of Algorithm \ref{algo:cumulaacbr}, an in{\coherent} pair of cases would be considered at the same stratum. As every characterisation receives an outcome in a $\cAACBR$ framework, and exactly one, then if there is an in{\coherent} pair in the dataset, one of its examples would be {\includable} while the other would not. Therefore only the {\includable} example becomes an argument in the AA framework. Although an in{\coherent} dataset may not have a concise subset, this approach finds a {\coherent} subset which always chooses among one of the conflicting examples, using {\includability} as the criterion for choice.\footnote{Indeed, from this reasoning one can also see that \emph{every concise subset is also {\coherent}}.}
As an example, consider again Figure \ref{fig:example-legal-3}. Following Algorithm \ref{algo:cumulaacbr}, we see that in the first \texttt{while} loop both $\caseset[hm]{+}$ and $\caseset[hm]{-}$ are in the stratum. Since the default outcome is $-$, $\caseset[hm]{+}$ is a surprising case \wrt{} $\emptyset$ and thus is added, while $\caseset[hm]{-}$ is not and thus is not added, and the algorithm terminates.
\begin{theorem} \label{theo:caacbr-coherent}
  The dataset \gppnew{returned by} Algorithm \ref{algo:cumulaacbr} is \coherent.
\end{theorem}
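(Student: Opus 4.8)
\textbf{Proof plan for Theorem \ref{theo:caacbr-coherent}.}

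The plan is to argue directly from the structure of Algorithm \ref{algo:cumulaacbr}, showing that the returned set $D'$ cannot contain two cases $\fullcase{\casei}, \fullcase{\caseii}$ with $\charac{\casei} = \charac{\caseii}$ but $\outcome{\casei} \neq \outcome{\caseii}$. First I would observe that, since $\charac{\casei} = \charac{\caseii}$, the two cases are $\pleq$-incomparable to each other in the strict order (indeed equal in the preorder), so at the point where one of them is popped into a $stratum$, the other is $\pleq$-minimal in $unprocessed$ too; hence both $\casei$ and $\caseii$ are processed within the \emph{same} iteration of the \texttt{while} loop, being placed in the same $stratum$. This is the key observation and the one place where the use of strata (rather than one-by-one processing) is essential for the argument.

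Next I would use the fact, established in the preliminaries around Definition \ref{def:dear-aacbr-def} and in the surrounding discussion, that in any $\PAACBR$ framework every characterisation receives \emph{exactly one} outcome (the default argument is either in the grounded extension or not, and likewise the classifier $\paacbr$ is total and two-valued). In particular, with respect to the current framework $(\Args, \attacks)$ at the start of that \texttt{while} iteration, exactly one of $\paacbr$ applied to $\charac{\casei}$ equals $\outcome{\casei}$ or not, and the same for $\caseii$; but since $\charac{\casei} = \charac{\caseii}$ and $\outcome{\casei} \neq \outcome{\caseii}$, it is impossible that both $\casei$ and $\caseii$ pass the ``is not $case\_outcome$'' test. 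So at most one of them is placed in $to\_add$, and therefore at most one of them is ever inserted into $\Args$ by $simple\_add$. Since this holds for every such pair, and since no \emph{other} iteration of the loop can add a case with characterisation $\charac{\casei}$ (every case with that characterisation is, as noted, in the same stratum and processed together), $D' = \Args \setminus \{\defcase\}$ contains at most one case per characterisation, i.e.\ $D'$ is \coherent.

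The one technical subtlety to spell out — and the main obstacle — is ruling out a ``later reversal'': one must confirm that once a member of the stratum is rejected (not put in $to\_add$), no subsequent stratum can reintroduce a case with the same characterisation and opposite outcome, and that the test for the cases inside $to\_add$ is all performed against the \emph{same} frozen framework $(\Args,\attacks)$ (the inner \texttt{for} loop over $stratum$ does not mutate $(\Args,\attacks)$; only the final \texttt{for} loop over $to\_add$ does). Both facts are immediate from the pseudocode: the first because all cases sharing a characterisation are $\pleq$-minimal simultaneously and hence belong to a single stratum, which is removed from $unprocessed$ in one shot; the second by inspection of the loop structure. With these checked, the conclusion that $D'$ is \coherent\ follows. (Note this also re-proves the footnote remark that every concise subset is \coherent, since by Theorem \ref{theo:existence-concise} any concise subset of $D$ coincides with the $D'$ returned by the algorithm.)
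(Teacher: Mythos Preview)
Your proposal is correct and follows essentially the same approach as the paper's proof: both argue that any two cases sharing a characterisation land in the same stratum, and since the outcome test is performed against a single current framework which assigns exactly one outcome to that characterisation, at most one of the pair is placed in $to\_add$. Your write-up is more explicit about the two subtleties (no later stratum can reintroduce the rejected case, and the test loop does not mutate $(\Args,\attacks)$), which the paper leaves implicit, but the underlying argument is the same.
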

Note that now that a {\coherent} subset is used as basis for the inference, whenever the default case is not in the grounded extension, it will be attacked by a case which is in it.\footnote{In more detail, this is so since the AF would be well-founded, and thus every argument outside the grounded extension would be attacked by it (see \citealp{Dung:95}).} Thus we have a ``principled'' way of dealing with in{\coherence}s, in which the {\includable} example is always kept.

\fi

\paragraph{Spikes.}
An inconvenience in $\paacbr$ is the presence of cases in the AF which do not reach the default case.  While part of the AF, they do not affect whether the default case $\defcase$ is or not in the grounded extension, and thus the outcome.
Formally, these cases can be defined as follows, for  $\paacbr$ as well as $\caacbr$:

\begin{definition}
  Let \(\myAF{} = \aaDN\) or \({\myAF{} = \caaDN{}}\), and \(\casei \in D \cap \Args\). Then, $\arga$ is a \emph{spike} iff there is no path in \(\myAF\) from \arga{} to \(\defcase\).
\end{definition}

As a simple example, consider the casebase in Figure \ref{fig:n_2_alt}, and add $\caseset[b]{-}$ to it. It would be attacked by $\caseset[a,b,c]{+}$, but it would attack no other argument. Thus, $\caseset[b]{-}$ would not reach any other argument and would, then, be a spike.

Spikes are unhelpful, since their presence is entirely superfluous, that is, they can be removed with no change in outcome, for any new case. 
\begin{theorem} \label{theo:remove-spike}
  Let \(\myAF{} = \aaDN\) and \(\casei \in D \cap \Args\) be a spike. Then \(\paacbr(D \setminus \{\casei\}, \newcasecharac) = \paacbr(D, \newcasecharac)\).
\end{theorem}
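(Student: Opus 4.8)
\textbf{Proof plan for Theorem~\ref{theo:remove-spike}.} The plan is to show that removing a spike $\casei$ from $D$ changes neither the arguments that matter for the outcome nor their status in the grounded extension. First I would set $\myAF = \aaDN = (\Args,\attacks)$ and $\myAF^- = \aaFtwo{D\setminus\{\casei\}}{\newcasecharac} = (\Args^-,\attacks^-)$, noting that $\Args^- = \Args\setminus\{\casei\}$ and $\attacks^- = \attacks\cap(\Args^-\times\Args^-)$. The one delicate point is that the attack relation in Definition~\ref{def:dear-aacbr-def} is not simply inherited under deletion: clause 3 (the ``no intermediate case with the same outcome'' condition) could in principle create \emph{new} attacks in $\myAF^-$ between cases $\gamma,\beta$ whose former blocker was $\casei$. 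So the first real step is to argue that this does not happen in a way that reaches $\defcase$: any case $\gamma$ that $\casei$ used to block (i.e. with $\gamma_C \pg \charac{\casei} \pg \beta_C$, same outcome as $\casei$) is itself $\pg$-above $\charac\casei$; but $\casei$ being a spike means $\charac\casei \pgeq \defcharac$ has no ``downward'' attack path, and I would use well-foundedness/acyclicity of the AF (guaranteed for \wellbehaved\ mined AFs when $D$ is \coherent, and handled via the preorder argument otherwise) together with the minimality built into clause 3 to show that the portion of $\attacks^-$ differing from $\attacks$ involves only arguments that are themselves spikes or lie ``behind'' $\casei$, hence still cannot reach $\defcase$.

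The cleaner route, which I would actually pursue, is to avoid reasoning about $\attacks$ versus $\attacks^-$ directly and instead reason about the grounded extension via its iterative construction $\groundext = \bigcup_{i\geq 0} G_i$. Let $R$ be the set of arguments from which there \emph{is} a path to $\defcase$ in $\myAF$ (so $\defcase \in R$ and $\casei \notin R$ since $\casei$ is a spike, and $\newcasearg$ may or may not be in $R$). I would show that $R$ is ``upward closed under attackers'': if $a \in R$ and $b \attacks a$ then $b \in R$ (since $b \attacks a \to \cdots \to \defcase$ is a path). Consequently the sub-AF induced on $R$ has exactly the same attack edges in $\myAF$ and in $\myAF^-$ once we check that $\casei \notin R$ forces every clause-3 blocker relevant to edges \emph{inside} $R$ to also lie inside $R$ — and the new edges that deletion of $\casei$ might introduce all have their head outside $R$ (they would be edges into cases $\gamma$ with $\gamma_C \pg \charac\casei$, and such $\gamma$, if it reached $\defcase$, would give $\casei$ a path to $\defcase$ too, contradiction). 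Hence the restriction of $\myAF$ to $R$ equals the restriction of $\myAF^-$ to $R$.

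The final step is a general lemma about grounded semantics: if $R$ is a set of arguments closed under attackers (every attacker of a member of $R$ is in $R$), then for any argument $a\in R$, $a \in \groundext(\myAF)$ iff $a \in \groundext(\myAF\restriction R)$. This follows by induction on the level $i$ in the construction of the grounded extension — membership of $a\in R$ in $G_{i+1}$ depends only on its attackers (all in $R$) and on defenders against those attackers, and one can always take defenders from within $R$ since an attacker of an attacker of $a\in R$ is again in $R$; formally I would prove $G_i(\myAF)\cap R = G_i(\myAF\restriction R)$ for all $i$ by simultaneous induction, using closure of $R$ both ways. Applying this with $a = \defcase$ gives $\defcase \in \groundext(\myAF)$ iff $\defcase \in \groundext(\myAF\restriction R) = \groundext(\myAF^-\restriction R)$ iff $\defcase \in \groundext(\myAF^-)$, and by the definition of the outcome this yields $\paacbr(D\setminus\{\casei\},\newcasecharac) = \paacbr(D,\newcasecharac)$.

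The main obstacle is the bookkeeping around clause 3 of the attack definition: one must be careful that deleting $\casei$ does not ``promote'' some previously-blocked attack into a path reaching $\defcase$. I expect this to be dischargeable precisely because any such promoted attack has its target strictly $\pg$-above $\charac\casei$, and the spike hypothesis on $\casei$ rules out such a target being connected to $\defcase$; making this rigorous (perhaps by an induction on $\pg$ or by directly exhibiting, from a hypothetical path, a path from $\casei$ to $\defcase$) is the crux of the argument, while the grounded-extension restriction lemma is standard.
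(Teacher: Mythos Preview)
Your overall route---defining $R$ as the set of arguments with a path to $\defcase$, showing $R$ is attacker-closed, and invoking the standard fact that grounded membership for $a \in R$ depends only on the sub-AF induced on $R$---is exactly the content of the paper's (very terse) sketch, which just says that ``whether an argument is in the grounded extension is a function of the arguments that can reach it''. You actually go further than the paper by noticing that deleting $\casei$ can \emph{create} attacks via clause~3 of Definition~\ref{def:dear-aacbr-def}; the paper's sketch silently ignores this.

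However, your handling of that point has the direction reversed. A promoted attack (one that appears only after deleting $\casei$) is an edge $\eta \attacks \beta$ for which $\casei$ was a blocker: $\charac\eta \pg \charac\casei \pg \charac\beta$ with $\outcome\eta = \outcome\casei \neq \outcome\beta$. So the \emph{target} $\beta$ lies strictly \emph{below} $\charac\casei$, not above; your stated reasoning (``the target is $\pg$-above $\casei$, and if it reached $\defcase$ then so would $\casei$'') does not work as written. The correct step is: if such a promoted attack appears, then $\casei$ must have been the \emph{only} blocker of $\eta\attacks\beta$ in $D$ (else the attack would still be blocked in $D\setminus\{\casei\}$); consequently there is no $\gamma$ with $\charac\casei \pg \charac\gamma \pg \charac\beta$ and $\outcome\gamma = \outcome\casei$, i.e.\ clause~3 holds for $\casei$ itself against $\beta$, so $\casei \attacks \beta$ already in $\aaDN$. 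Hence if $\beta\in R$ then $\casei\in R$, contradicting that $\casei$ is a spike. This same observation also gives you attacker-closure of $R$ in $\aaFtwo{D\setminus\{\casei\}}{\newcasecharac}$, which you need in order to apply the restriction lemma on that side. With this fix your argument goes through and is in fact more complete than the paper's sketch.
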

Thus, a useful step in practice is removing spikes from the AF when visualising or storing (e.g. for caching), since the AF may become significantly leaner (indeed, we do this in the case study in Section \ref{sec:case-study}).

Instead, \caacbr{} shows no spikes, by construction, given that spikes are not \includable, and thus are not added to \caaDN. 
\begin{theorem} \label{theo:no-spikes}
  Let \(\myAF{} = \caaDN\). Then, there are no spikes in \(\Args\).
\end{theorem}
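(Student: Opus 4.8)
The plan is to reduce the claim to a statement about the AF mined from $D'$ alone, and then to contradict it using Theorems~\ref{theo:existence-concise}, \ref{theo:remove-spike} and~\ref{theo:caacbr-coherent} together with one auxiliary lemma about $\paacbr$. First I would note that in $\caaDN$ the new case $\newcasearg$ is never the target of an attack, hence it lies on no path ending at $\defcase$; and since it never serves as an ``intermediate'' in the attack clause of Definition~\ref{def:dear-aacbr-def} (intermediates range only over $D'\cup\{\defcase\}$), its presence does not change any attack among $D'\cup\{\defcase\}$. Therefore a case $\casei\in D'$ is a spike in $\caaDN$ iff it is a spike in the AF mined from $D'$ alone, equivalently in $\aaFtwo{D'}{\casei_C}$. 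So it suffices to show that every $\casei\in D'$ has a path to $\defcase$ in the AF mined from $D'$.

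Next I would argue by contradiction: assume some $\casei\in D'$ is a spike. By Theorem~\ref{theo:existence-concise}, $\casei$ is \surprising\ \wrt\ $D'$, i.e.\ $\paacbr(D'\setminus\{\casei\},\casei_C)\neq\casei_o$. Since $\casei$ is also a spike in $\aaFtwo{D'}{\casei_C}$, Theorem~\ref{theo:remove-spike} (with casebase $D'$ and new case $\casei_C$) gives $\paacbr(D'\setminus\{\casei\},\casei_C)=\paacbr(D',\casei_C)$, so $\paacbr(D',\casei_C)\neq\casei_o$. This contradicts the following lemma, which I would establish separately: \emph{if $D$ is a \coherent\ casebase and $\casei\in D$, then $\paacbr(D,\casei_C)=\casei_o$} (and $D'$ is \coherent\ by Theorem~\ref{theo:caacbr-coherent}). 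Hence no $\casei\in D'$ is a spike, proving the theorem.

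For the lemma I would proceed in three steps. (i) \emph{Localisation}: in $\aaFtwo{D}{\casei_C}$ the new-case node is unattacked and attacks exactly the cases $\argb$ with $\casei_C\not\pgeq\argb_C$, i.e.\ those outside the down-set $E=\{\argb\in D\mid\argb_C\pleq\casei_C\}$; a routine argument (the usual way the new case ``localises'' $\paacbr$ to the down-set of its characterisation) shows that $\defcase$ is in the grounded extension of $\aaFtwo{D}{\casei_C}$ iff it is in the grounded extension of the AF mined from $E$ alone. (ii) Because $D$ is \coherent\ and $\casei_C$ is the greatest characterisation occurring in $E$, $\casei$ is the unique $\pleq$-maximal case in that AF, hence unattacked, hence in its grounded extension. (iii) By induction along the partial order on the characterisations occurring in $E$, one shows $\defcase$ is in the grounded extension of the AF mined from $E$ iff $\casei_o=\defoutcome$; intuitively, the unique maximal case $\casei$ wins and propagates its verdict down to $\defcase$.

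The hard part will be step (iii). Because of the ``no intermediate case with the same outcome'' clause of Definition~\ref{def:dear-aacbr-def}, the attack relation is not local: removing $\casei$ from the AF mined from $E$ does not in general yield a structurally identical smaller instance, so one cannot simply peel off $\casei$ and recurse on $E\setminus\{\casei\}$. I expect the induction must instead be organised around the alternating attack/defence chains of grounded semantics (equivalently, around the maximal sub-chains of $E$ below $\casei_C$), distinguishing the case $\casei_o\neq\defoutcome$ (where $\casei$ may attack $\defcase$ more or less directly) from the case $\casei_o=\defoutcome$ (where $\casei$ does not attack $\defcase$ at all and must force it out of the grounded extension only through an intervening chain of cases of the opposite outcome). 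Everything else --- the reduction of the first paragraph and steps (i)--(ii) --- is routine.
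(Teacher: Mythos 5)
Your proof is correct, but it takes a genuinely different route from the paper's. The paper argues by induction over the execution of Algorithm~\ref{algo:cumulaacbr}: it takes the \emph{first} spike $\casei$ added to $\Args$, uses the contrapositive of the nearest-cases property (Theorem~\ref{theo:nearest_neighbours} in the appendix) to find a nearest case $\caseii$ with opposite outcome that $\casei$ attacks (via Lemma~\ref{lemma:attacked-entering-case}), and observes that $\caseii$ must then be an earlier-added spike --- contradiction. You instead avoid reasoning about the algorithm's execution altogether and derive the result purely from already-stated properties of the output $D'$: every $\casei\in D'$ is \surprising{} (Theorem~\ref{theo:existence-concise}), a spike can be removed without changing the prediction for its own characterisation (Theorem~\ref{theo:remove-spike}), and a coherent casebase (Theorem~\ref{theo:caacbr-coherent}) predicts $\casei_o$ on $\casei_C$ for any $\casei\in D'$; these three facts are jointly inconsistent with the existence of a spike. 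This is arguably cleaner and more modular, since it would apply to \emph{any} coherent casebase all of whose members are \surprising, not just the one produced by the algorithm. Your preliminary reduction (the new case is unattacked, so it lies on no path to $\defcase$ and spike-hood is independent of the new case) is needed and correct.

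The only weak spot is your auxiliary lemma ($D$ coherent and $\casei\in D$ imply $\paacbr(D,\casei_C)=\casei_o$), whose step (iii) you flag as hard and leave unproved. It is in fact immediate from Theorem~\ref{theo:nearest_neighbours} (proved in the appendix, the same result the paper's own proof leans on): since $\casei\in D$ and $\casei_C\pleq\casei_C$, any $\caseii\in D$ with $\caseii_C\pl\casei_C$ has $\casei$ strictly between it and the new case, so $\casei$ is the \emph{unique} nearest case to $\casei_C$ (coherence rules out a second case with the same characterisation and a different outcome), and the theorem yields outcome $\casei_o$ directly. So the induction along chains that you anticipate as the hard part is unnecessary; with that substitution your argument is complete.
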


\section{Case study}
\label{sec:case-study}
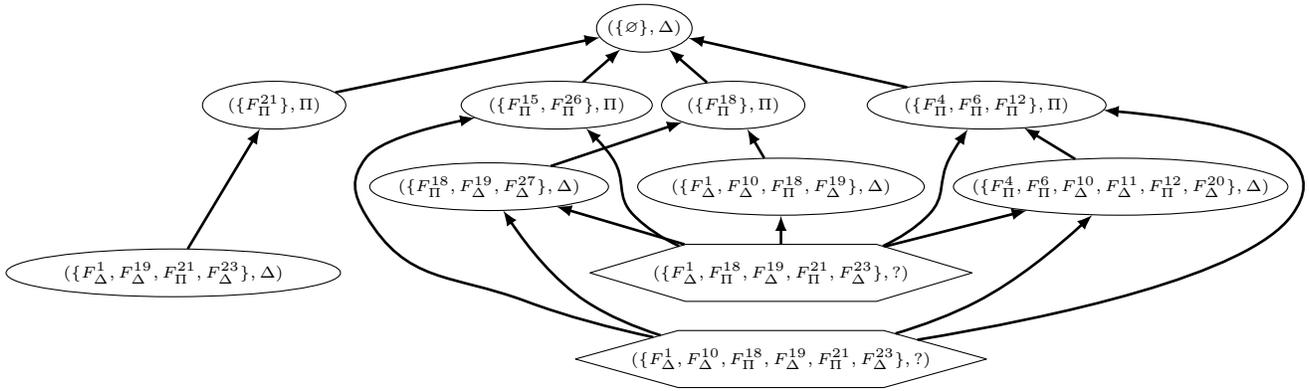
\begin{figure*}[h]
\centering
  \begin{tikzpicture}[>=latex,line join=bevel, scale=0.5, font=\tiny]
    \pgfsetlinewidth{1bp}
\begin{scope}
      \pgfsetstrokecolor{black}
      \definecolor{strokecol}{rgb}{1.0,1.0,1.0};
      \pgfsetstrokecolor{strokecol}
      \definecolor{fillcol}{rgb}{1.0,1.0,1.0};
      \pgfsetfillcolor{fillcol}
      \filldraw (0.0bp,0.0bp) -- (0.0bp,289.0bp) -- (977.9bp,289.0bp) -- (977.9bp,0.0bp) -- cycle;
    \end{scope}
    \begin{scope}
      \pgfsetstrokecolor{black}
      \definecolor{strokecol}{rgb}{1.0,1.0,1.0};
      \pgfsetstrokecolor{strokecol}
      \definecolor{fillcol}{rgb}{1.0,1.0,1.0};
      \pgfsetfillcolor{fillcol}
      \filldraw (0.0bp,0.0bp) -- (0.0bp,289.0bp) -- (977.9bp,289.0bp) -- (977.9bp,0.0bp) -- cycle;
    \end{scope}
    \pgfsetcolor{black}
\draw [->] (434.8bp,230.4bp) .. controls (441.24bp,236.06bp) and (448.49bp,242.43bp)  .. (463.01bp,255.19bp);
\draw [->] (136.89bp,104.62bp) .. controls (149.77bp,126.07bp) and (171.33bp,161.95bp)  .. (191.38bp,195.32bp);
\draw [->] (248.07bp,222.58bp) .. controls (300.74bp,233.53bp) and (386.3bp,251.31bp)  .. (447.58bp,264.05bp);
\draw [->] (571.41bp,173.01bp) .. controls (568.9bp,177.29bp) and (566.25bp,181.83bp)  .. (558.5bp,195.07bp);
\draw [->] (528.26bp,230.09bp) .. controls (521.71bp,235.76bp) and (514.32bp,242.16bp)  .. (499.45bp,255.03bp);
\draw [->] (410.43bp,167.02bp) .. controls (437.74bp,176.15bp) and (472.21bp,187.67bp)  .. (509.65bp,200.18bp);
\draw [->] (805.81bp,172.32bp) .. controls (796.17bp,178.19bp) and (785.64bp,184.6bp)  .. (767.12bp,195.88bp);
\draw [->] (679.16bp,226.45bp) .. controls (631.72bp,237.12bp) and (566.55bp,251.77bp)  .. (514.1bp,263.56bp);
\draw [->] (507.15bp,106.72bp) .. controls (494.59bp,112.65bp) and (482.58bp,120.26bp)  .. (473.0bp,130.0bp) .. controls (458.71bp,144.54bp) and (468.23bp,155.98bp)  .. (457.0bp,173.0bp) .. controls (453.43bp,178.41bp) and (449.01bp,183.62bp)  .. (437.04bp,195.51bp);
\draw [->] (584.0bp,108.14bp) .. controls (584.0bp,111.83bp) and (584.0bp,115.73bp)  .. (584.0bp,129.73bp);
\draw [->] (511.35bp,107.96bp) .. controls (483.15bp,116.3bp) and (451.23bp,125.73bp)  .. (414.66bp,136.53bp);
\draw [->] (661.85bp,106.27bp) .. controls (692.78bp,114.12bp) and (728.35bp,123.15bp)  .. (769.49bp,133.6bp);
\draw [->] (660.85bp,106.72bp) .. controls (673.41bp,112.65bp) and (685.42bp,120.26bp)  .. (695.0bp,130.0bp) .. controls (709.29bp,144.54bp) and (701.52bp,154.95bp)  .. (711.0bp,173.0bp) .. controls (713.34bp,177.46bp) and (716.1bp,182.0bp)  .. (724.87bp,194.9bp);
\draw [->] (487.78bp,37.979bp) .. controls (456.69bp,44.704bp) and (422.41bp,53.626bp)  .. (392.0bp,65.0bp) .. controls (334.82bp,86.384bp) and (302.33bp,77.606bp)  .. (271.0bp,130.0bp) .. controls (261.19bp,146.4bp) and (259.44bp,157.79bp)  .. (271.0bp,173.0bp) .. controls (280.35bp,185.3bp) and (312.17bp,194.82bp)  .. (353.67bp,203.53bp);
\draw [->] (493.29bp,39.365bp) .. controls (473.77bp,45.607bp) and (453.93bp,53.936bp)  .. (437.0bp,65.0bp) .. controls (413.32bp,80.473bp) and (393.19bp,105.9bp)  .. (374.6bp,133.56bp);
\draw [->] (670.35bp,40.749bp) .. controls (691.15bp,47.033bp) and (712.88bp,55.033bp)  .. (732.0bp,65.0bp) .. controls (761.47bp,80.362bp) and (790.62bp,104.38bp)  .. (818.65bp,130.15bp);
\draw [->] (686.79bp,36.022bp) .. controls (789.15bp,52.876bp) and (935.72bp,84.33bp)  .. (969.0bp,130.0bp) .. controls (980.26bp,145.45bp) and (981.3bp,158.37bp)  .. (969.0bp,173.0bp) .. controls (951.84bp,193.4bp) and (891.57bp,203.4bp)  .. (826.93bp,209.15bp);
\begin{scope}
      \definecolor{strokecol}{rgb}{0.0,0.0,0.0};
      \pgfsetstrokecolor{strokecol}
      \draw [thin] (415.0bp,213.0bp) ellipse (72.0bp and 18.0bp);
      \draw (415.0bp,213.0bp) node {$\caseset[F^{15}_\plaintiff, F^{26}_\plaintiff]{\plaintiff}$};
    \end{scope}
\begin{scope}
      \definecolor{strokecol}{rgb}{0.0,0.0,0.0};
      \pgfsetstrokecolor{strokecol}
      \draw [thin] (202.0bp,213.0bp) ellipse (54.0bp and 18.0bp);
      \draw (202.0bp,213.0bp) node {$\caseset[F^{21}_\plaintiff]{\plaintiff}$};
    \end{scope}
\begin{scope}
      \definecolor{strokecol}{rgb}{0.0,0.0,0.0};
      \pgfsetstrokecolor{strokecol}
      \draw [thin] (548.0bp,213.0bp) ellipse (54.0bp and 18.0bp);
      \draw (548.0bp,213.0bp) node {$\caseset[F^{18}_\plaintiff]{\plaintiff}$};
    \end{scope}
\begin{scope}
      \definecolor{strokecol}{rgb}{0.0,0.0,0.0};
      \pgfsetstrokecolor{strokecol}
      \draw [thin] (126.0bp,86.5bp) ellipse (126.0bp and 18.0bp);
      \draw (126.0bp,86.5bp) node {$\caseset[F^{1}_\defendant, F^{19}_\defendant, F^{21}_\plaintiff, F^{23}_\defendant]{\defendant}$};
    \end{scope}
\begin{scope}
      \definecolor{strokecol}{rgb}{0.0,0.0,0.0};
      \pgfsetstrokecolor{strokecol}
      \draw [thin] (728.0bp,86.5bp) -- (656.0bp,108.0bp) -- (512.0bp,108.0bp) -- (440.0bp,86.5bp) -- (512.0bp,65.0bp) -- (656.0bp,65.0bp) -- cycle;
      \draw (584.0bp,86.5bp) node {$\caseset[F^{1}_\defendant, F^{18}_\plaintiff, F^{19}_\defendant, F^{21}_\plaintiff, F^{23}_\defendant]{?}$};
    \end{scope}
\begin{scope}
      \definecolor{strokecol}{rgb}{0.0,0.0,0.0};
      \pgfsetstrokecolor{strokecol}
      \draw [thin] (481.0bp,271.0bp) ellipse (36.0bp and 18.0bp);
      \draw (481.0bp,271.0bp) node {$\caseset[\emptyset]{\defendant}$};
    \end{scope}
\begin{scope}
      \definecolor{strokecol}{rgb}{0.0,0.0,0.0};
      \pgfsetstrokecolor{strokecol}
      \draw [thin] (584.0bp,151.5bp) ellipse (108.0bp and 21.5bp);
      \draw (584.0bp,151.5bp) node {$\caseset[F^{1}_\defendant, F^{10}_\defendant, F^{18}_\plaintiff, F^{19}_\defendant]{\defendant}$};
    \end{scope}
\begin{scope}
      \definecolor{strokecol}{rgb}{0.0,0.0,0.0};
      \pgfsetstrokecolor{strokecol}
      \draw [thin] (364.0bp,151.5bp) ellipse (90.0bp and 18.0bp);
      \draw (364.0bp,151.5bp) node {$\caseset[F^{18}_\plaintiff, F^{19}_\defendant, F^{27}_\defendant]{\defendant}$};
    \end{scope}
\begin{scope}
      \definecolor{strokecol}{rgb}{0.0,0.0,0.0};
      \pgfsetstrokecolor{strokecol}
      \draw [thin] (840.0bp,151.5bp) ellipse (126.0bp and 21.5bp);
      \draw (840.0bp,151.5bp) node {$\caseset[F^{4}_\plaintiff, F^{6}_\plaintiff, F^{10}_\defendant, F^{11}_\defendant, F^{12}_\plaintiff, F^{20}_\defendant]{\defendant}$};
    \end{scope}
\begin{scope}
      \definecolor{strokecol}{rgb}{0.0,0.0,0.0};
      \pgfsetstrokecolor{strokecol}
      \draw [thin] (739.0bp,213.0bp) ellipse (90.0bp and 18.0bp);
      \draw (739.0bp,213.0bp) node {$\caseset[F^{4}_\plaintiff, F^{6}_\plaintiff, F^{12}_\plaintiff]{\plaintiff}$};
    \end{scope}
\begin{scope}
      \definecolor{strokecol}{rgb}{0.0,0.0,0.0};
      \pgfsetstrokecolor{strokecol}
      \draw [thin] (739.0bp,21.5bp) -- (661.5bp,43.0bp) -- (506.5bp,43.0bp) -- (429.0bp,21.5bp) -- (506.5bp,0.0bp) -- (661.5bp,0.0bp) -- cycle;
      \draw (584.0bp,21.5bp) node {$\caseset[F^{1}_\defendant, F^{10}_\defendant, F^{18}_\plaintiff, F^{19}_\defendant, F^{21}_\plaintiff, F^{23}_\defendant]{?}$};
    \end{scope}
\end{tikzpicture}
\caption{Resulting AA framework for the U.S. Trade Secrets casebase. \gppnew{Each case in the original dataset yields possibly many arguments, each argument represented by its factors and outcomes. Some of the factores are: $F^{1}_\defendant$: the plaintiff disclosed its product information in negotiations with defendant; $F^{21}_\plaintiff$: defendant obtained plaintiff's information altough he knew that plaintiff's information was confidential $F^{18}_\plaintiff$: defendant's product was identical to plaintiff's.}}
  \label{fig:case-study}
\end{figure*}

We now explore, as a case study for our approach, the US Trade Secrets domain, frequently discussed in the AI and Law literature \cite{DBLP:conf/icail/RisslandA87,DBLP:conf/icail/BruninghausA03,DBLP:journals/ail/Bench-Capon17}.
This area of law deals with misappropriation of commercially relevant information that, allegedly, should not have been available or used by another party. The stereotypical scenario is of a company, the plaintiff, suing another, the defendant, claiming that such misappropriation happened, resulting in economic loss for the plaintiff.
In this setting, each case is represented by \emph{factors} each supporting either plaintiff or defendant, and an outcome, which may be a win for plaintiff ($\plaintiff$) or defendant ($\defendant$). 
Formally, each such case is of the form $(F^\plaintiff,F^\defendant,o)$ where $F^\plaintiff$ are the factors supporting the plaintiff, $F^\defendant$, the defendant, and $o \in \{\plaintiff,\defendant\}$ is the case outcome.
Example of pro-plaintiff factors are that the information was about a product which was unique, in the sense that only the plaintiff manufactured this product ($F^{15}_\plaintiff$), and that the defendant knew that the information was confidential ($F^{21}_\plaintiff$), while some pro-defendant factors are that the plaintiff disclosed the information in negotiations with the defendant ($F^{1}_\defendant)$, and that the plaintiff disclosed the information in a public forum ($F^{27}_\defendant$). For this case study, we use the publicly available 32 cases \cite{DBLP:journals/ail/ChorleyB05,Al_Abdulkarim_2015,DBLP:phd/ethos/Abdulkarim17,Grabmair_thesis}.

Since factors are polarised representations, that is, they indicate a side, we would lose information in treating them simply as features of \oaacbr. It is necessary to incorporate the idea that, if a case is in favour, for instance, of the plaintiff, then removing one of its pro-defendant factors should still decide the same outcome. This is the idea of a case being \emph{constrained}, as typical in the literature of precedential constraint in AI and Law \cite{Horty_2012,Horty_2019,Prakken2020ATM,Prakken_2021}. We accommodate this idea 
by changing the representation of cases. Formally, if a case is $(F^\plaintiff,F^\defendant,o)$, then it yields the following set of $\aacbr$ cases: $\{(F^\plaintiff \cup Y, \plaintiff) \mid Y \subseteq F^\defendant \}$, if $o = \plaintiff$; and $\{(F^\defendant \cup Y, \defendant) \mid Y \subseteq F^\plaintiff \}$, if $o = \defendant$. That is, a single case becomes multiple cases \wrt{} \aacbr. Even though this is not a compact representation (indeed, it is exponential), we only aim to show how $\aacbr$ and cautious monotonicity applies in this domain, not to provide a scalable representation.

In order to give an appropriate comparison of the resulting AF of both \paacbr{} and \caacbr, we remove spikes in the \paacbr{} AF. This makes a more appropriate comparison to \caacbr{} . It turns out that, for this case base, the resulting AF is the same for both \paacbr{} and \caacbr, and shown in Figure \ref{fig:case-study}. However, we show that $\paacbr$ could be manipulated by its violation of cautious monotonicity, while $\caacbr$ cannot.

Consider the following new cases ${\charac{N1} = \caseset[F^{1}_\defendant, F^{18}_\plaintiff, F^{19}_\defendant, F^{21}_\plaintiff, F^{23}_\defendant]{?}}$ and ${\charac{N2}=\caseset[F^{1}_\defendant, F^{10}_\defendant, F^{18}_\plaintiff, F^{19}_\defendant, F^{21}_\plaintiff, F^{23}_\defendant]{?}}$. We can think in terms of two cases an attorney needs to argue, and would like to have a specific outcome, for instance, pro-plaintiff ($\plaintiff$).
For $\charac{N1}$, the predicted $\paacbr$ outcome is $\plaintiff$. For $\charac{N2}$, it is $\defendant$. However, when adding $\case{\charac{\charac{N1}}}{\plaintiff}$ to the casebase, the $\paacbr$ outcome of $\charac{N2}$ then changes to $\plaintiff$. In terms of the domain, a new case, $\charac{N1}$, which brings no different reason (be it distinguishing, change of social values, among others) for change of the case law, indeed changes the system, as proved by the change in $\charac{N2}$.
This implies our attorney in consideration, with no innovation in reasons, could achieve a desired outcome in $\charac{N2}$ by simply presenting it after $\charac{N1}$ is judged. Thus, presenting the cases in different orders would necessarily change the results, even if no new element is introduced, such as considerations of value, policy, or change of legislation.
This is not the case for $\caacbr$. It is straightforward to check that $\case{\charac{N1}}{\plaintiff}$ is non-includable, and thus adding it to the casebase would not change the AF mined from this dataset using $\caacbr$.

\section{Related work and discussion}
Cautious monotonicity is typically discussed in non-monotonic reasoning literature \cite{generalpatterns,DBLP:journals/ai/LehmannM92}, originally presented by \citet{DBLP:conf/nato/Gabbay84} as a reasonable condition for verifying if an allegedly reasoning system is indeed reasoning, that is, a rationality postulate. It is usually presented along cut and cumulativity, which are argued for by \citet{DBLP:journals/ai/KrausLM90} on computational and semantic bases. 

An important element for the occurrence of in\coherence{} in a dataset is the representation of the cases themselves. That is, an insufficiently expressive knowledge representation risks conflating otherwise distinct cases, giving rise to in\coherence{} if they have different outcomes. We should not think this is a matter left entirely to a human user, which would model a dataset by hand. If cases are thought as being originated from previous processes, such as automatic extraction of features by a natural language processing system, as previously done with \aacbr{} \cite{dear-2020}, it is expected that representations could fail in this way, and thus treatment of in\coherence is indeed necessary.

There is a long literature on CBR models for legal reasoning, starting with \citet{DBLP:conf/icail/RisslandA87}, which is surveyed by \citet{DBLP:journals/ail/Bench-Capon17}. The original goal of this literature was to capture the argumentative process, and the influence of abstract argumentation on this literature and on AI and Law in general is surveyed by \citet{DBLP:journals/argcom/Bench-Capon20}. However, its goal has expanded to include prediction of cases \cite{DBLP:conf/icail/BruninghausA03,Grabmair_2017} and to explain predictions \cite{Prakken2020ATM}.
Our treatment of factors (features for and against) in the case study is non-scalable in general, dictated by the restrictions imposed by the structure of \aacbr{}. Factors are subject to much research since their appearance in the work of \citet{DBLP:journals/ai/Aleven03}, and making \aacbr{} more suitable for dealing with them is a topic left for future work, with argumentation-based treatment of them for CBR already occurring in recent research, such as in the work of \citet{Prakken2020ATM}. Another knowledge engineering element also beyond the scope of this work is background knowledge not included in the cases themselves, frequent in the legal CBR literature in the form of a (typically hand-built) domain model, enriching the factor representation \cite{DBLP:journals/ai/Aleven03,DBLP:journals/ail/AshleyB09,Al_Abdulkarim_2016,Grabmair_2017}. For (regular) \aacbr{}, it is assumed that every relevant knowledge engineering aspect is captured by the partial order, case representations, and default argument.

Notwithstanding this literature, the implications of cautious monotonicity (or the lack of it) to legal reasoning has remained largely unexplored, particularly on CBR scenarios.\footnote{\citet{Prakken_1997} mentions it briefly, and cumulativity is critically discussed in non-monotonic reasoning more generally, but not on a CBR or legally motivated context.} We illustrate in Section \ref{sec:case-study} an unexpected consequence of violating it, namely, manipulability of outcomes by leveraging on the order of presentation of new cases. Of course, our analysis is limited and further exploration of the relations between case-based reasoning in law and properties of non-monotonic reasoning systems is still required and left to future work.

\citet{Horty_2012} and \citet{Horty_2019} present formal analyses of precedential constraint. In discussing case base dynamics in the reason model of precedential constraint, \citet{Horty_2012} found out that ``simply following a precedent rule can lead to a change in the law''. One may be led to believe this is an affirmation that an adequate modelling of case law is not cautious monotonic. However, this is not necessarily so. They show that following a rule originated from previous cases may make a decision maker unable to distinguish a new case. That is, merely following a past rule in a case may strengthen the precedential constraint of it, but - and this is the crucial point - we can verify that it would not make a new case previously constrained to an outcome to be constrained to a different outcome. Besides, this effect is only possible if the decision maker is not constrained to an outcome in the changing case (that is, it is still possible to distinguish consistently).

\section{Conclusion}
\label{sec:org500c49c}

We have studied {\wellbehaved} $\paacbr$ frameworks, and proposed a new form of \aacbr, denoted $\caacbr$, which is cautiously monotonic and, as a by-product, cumulative and rationally monotonic.
\ifincludeincoherent
We also show that it results in a principled way of dealing with in{\coherence} in casebases, something which $\paacbr$ lacks.
\fi
Given that $\paacbr$ admits the original \oaacbr\ \cite{DBLP:conf/kr/CyrasST16} as an instance, we have (implicitly) also  defined a cautiously monotonic version thereof.

(Some incarnations of) \aacbr\ have been shown successful empirically in a number of settings \cite{dear-2020}.
The formal properties we have considered in this paper do not necessarily imply better empirical results at the tasks in which $\aacbr$ has been applied. We thus leave for future work an empirical comparison between $\paacbr$ and $\caacbr$. Other issues open for future work are comparisons \wrt\ learnability (such as model performance in the presence of noise), as well as a full complexity analysis of the new model. Also, 
we conjecture that the reduced size of the AF our method generates could possibly have advantages in terms of time and space complexity: we leave investigation of this issue to future work.

\section*{Acknowledgements}
\label{sec:orgab5201e}
We are very grateful to Kristijonas Čyras and Ken Satoh for very valuable discussions, as well as to Alexandre A. A. Almeida, Victor Nascimento and Matheus Müller for reviewing initial drafts of this paper.
We are also grateful for comments by email from Kevin Ashley, as well as for useful comments from anonymous reviewers.
The first author was supported by Capes (Brazil, Ph.D. Scholarship 88881.174481/2018-01).

\bibliographystyle{kr}

\appendix
\newpage

\section{Proofs of theorems}
\subsection{Proof of Theorem \ref*{theo:compl}}
Completeness and consistency here are immediate consequences from the fact that $\learn$ is a total function. This is in line on how classifiers are typically considering: returning exactly one output for each input. The remaining properties, which are the non-monotonic ones, are essentially a collapse caused by consistency.

\begin{proof}
  \begin{enumerate}
\item  By definition of $\infers_{\learn}$, directly from the totality of $\learn$.
\item  By definition of $\infers_{\learn}$, since $\learn$ is a function.
\item  {Assume $\infers_{\learn}$ cautiously monotonic, and let $D \infers_{\learn} p$ and $D \cup \{p\} \infers_{\learn} q$, for $p,q \in \lang$. By completeness, either $D \infers_{\learn} q$ or $D \infers_{\learn} \neg q$ (here $\neg q= r$ if $q = \neg r$, and $\neg r$ if $q=r$). In the first case we are done. Suppose the second case holds. Since $D \infers_{\learn} p$, by cautious monotonicity $D \cup \{p\} \infers_{\learn} \neg q$. But then $D \infers_{\learn} q$ and $D \infers_{\learn} \neg q$, which is absurd since $\infers_{\learn}$ is consistent. Therefore $D \not \infers_{\learn} \neg q$, and then $D \infers_{\learn} q$. The converse can be proven analogously.}
  
\item {Trivial from 3.}
\item Since $\infers_{\learn}$ is complete, $D \not \infers_{\learn} \neg p$ implies $D \infers_{\learn} p$, and thus rational monotonicity is reduced to cautious monotonicity.
\end{enumerate}
\end{proof}

\subsection{Proof of Theorem \ref*{theo:unique-concise}}
In order to prove Theorem \ref*{theo:unique-concise}, we first require a useful lemma.
\subsubsection{Coinciding predictions.}
This lemma identifies a ``core'' in the casebase for the purposes of outcome prediction: this amounts to all past cases that are less (or equally) specific than the new case for which the prediction is sought. {In other words, irrelevant cases in the casebase do not affect the prediction in regular AFs.}

\begin{lemma} \label{lemma:lesser}
  Let $D_1$ and $D_2$ be two datasets. Let $\newcasecharac \in X$ be a characterisation, and ${D_i}_\newcasecharac = \{\casei \in D_i \mid \casei \pleq \newcasecharac\}$ {for $i=1,2$}. If ${D_1}_\newcasecharac = {D_2}_\newcasecharac$, then {$\paacbr(D_1,\newcasecharac) = \paacbr(D_2,\newcasecharac)$} (that is, \paacbr\ predicts the same outcome for $\newcasecharac$ given the two datasets).

\end{lemma}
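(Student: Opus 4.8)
The plan is to show that the relevant part of the mined AF --- the one that actually determines membership of $\defcase$ in the grounded extension --- only depends on $D_\newcasecharac$, the cases in $D$ at most as specific as $\newcasecharac$. First I would observe that, in the \wellbehaved\ setting, the new case $\newcasearg$ attacks exactly those cases $(\beta_C,\beta_o)$ with $\newcasecharac \not\pgeq \beta_C$, i.e.\ the cases \emph{not} in $D_\newcasecharac \cup \{\defcase\}$ (using that $\defcharac$ is the least element, so $\newcasecharac \pgeq \defcharac$ and $\defcase$ is never attacked by $\newcasearg$). So after the new case has been added, every argument outside $D_\newcasecharac \cup \{\defcase\} \cup \{\newcasearg\}$ is attacked by $\newcasearg$, which is itself unattacked. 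Hence $\newcasearg \in \groundext$, and all arguments it attacks are out of $\groundext$; what remains to be determined is the status of the arguments in $D_\newcasecharac \cup \{\defcase\}$.

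The key step is then to argue that, restricted to $D_\newcasecharac \cup \{\defcase\} \cup \{\newcasearg\}$, the attack structure among $D_\newcasecharac \cup \{\defcase\}$ is the same whether we started from $D_1$ or $D_2$, and moreover that no argument in $D_\newcasecharac \cup \{\defcase\}$ is attacked by any argument of $D$ outside this set. The latter holds because an attacker $(\alpha_C,\alpha_o)$ of some $(\beta_C,\beta_o)$ with $\beta_C \pleq \newcasecharac$ must satisfy $\alpha_C \pgeq \beta_C$; but such an $\alpha$ is irrelevant to $\newcasecharac$ only if $\newcasecharac \not\pgeq \alpha_C$, and in that case it is itself attacked by $\newcasearg$ and thus out of $\groundext$ --- so it cannot help to reinstate anything, and the usual grounded-extension computation over $D_\newcasecharac \cup \{\defcase\} \cup \{\newcasearg\}$ gives the same answer for $\defcase$ as the computation over the whole AF. The subtlety here is the third clause of the attack definition (the ``no intermediate $\gamma$ with the same outcome as $\alpha$'' condition): a $\gamma$ witnessing $\alpha_C \pg \gamma_C \pg \beta_C$ with $\gamma_o = \alpha_o$ has $\gamma_C \pleq \beta_C \pleq \newcasecharac$, hence $\gamma \in D_\newcasecharac$, so whether an attack between two elements of $D_\newcasecharac \cup \{\defcase\}$ is present is entirely decided by $D_\newcasecharac \cup \{\defcase\}$ itself; since ${D_1}_\newcasecharac = {D_2}_\newcasecharac$, these induced sub-AFs coincide.

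Putting this together: $\defcase \in \groundext$ for the AF mined from $D_i$ and $\newcasecharac$ if and only if $\defcase$ is in the grounded extension of the fixed AF on $D_\newcasecharac \cup \{\defcase\} \cup \{\newcasearg\}$, which does not depend on $i$; hence $\paacbr(D_1,\newcasecharac) = \paacbr(D_2,\newcasecharac)$. The main obstacle I anticipate is making precise and airtight the claim that arguments outside $D_\newcasecharac \cup \{\defcase\}$ (after being killed by $\newcasearg$) cannot indirectly affect the status of $\defcase$ --- this needs a small induction on the grounded-extension stages $G_0 \subseteq G_1 \subseteq \cdots$, showing by induction that $G_i \cap (D_\newcasecharac \cup \{\defcase\})$ is exactly the $i$-th stage of the grounded extension of the restricted AF, using at each step that any attacker coming from outside the restricted set is already excluded. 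Everything else is a routine unfolding of Definition~\ref{def:dear-aacbr-def} together with the two \wellbehaved\ conditions of Definition~\ref{def:wellbehav}.
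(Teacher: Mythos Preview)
Your approach is correct and genuinely different from the paper's. The paper does not factor through a common ``restricted'' AF; instead it compares the two grounded extensions directly, proving by induction on stages that $G^1_j \subseteq G^2_{j+1}$ and symmetrically $G^2_j \subseteq G^1_{j+1}$, and concluding that $\defcase$ has the same status in both. Your route is more modular: you first isolate the sub-AF on $D_{\newcasecharac}\cup\{\defcase\}\cup\{\newcasearg\}$, check (using the third clause of the attack definition, exactly as you describe) that this sub-AF is literally the same for $D_1$ and $D_2$, and then argue that in the full AFs the arguments outside this set are all directly killed by the unattacked $\newcasearg$ and therefore do not affect membership of $\defcase$ in $\groundext$. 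This buys a cleaner separation of the two ingredients (the structural ``same sub-AF'' observation and the general ``killed arguments do not matter'' observation), at the cost of needing that second observation, which the paper's direct cross-comparison avoids stating.

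One small imprecision to fix when you write it out: the inductive statement ``$G_i \cap (D_{\newcasecharac}\cup\{\defcase\})$ is exactly the $i$-th stage $H_i$ of the grounded extension of the restricted AF'' does not hold with equality at each stage. An argument $\beta\in D_{\newcasecharac}$ may be unattacked in the restricted AF (so $\beta\in H_0$) yet attacked in the full AF by some $\alpha\notin D_{\newcasecharac}$ (so $\beta\notin G_0$, only $\beta\in G_1$ after $\newcasearg$ defends it). The correct pair of inclusions is $G_i\cap(\text{restricted})\subseteq H_i$ and $H_i\subseteq G_{i+1}$; together these give $\groundext\cap(\text{restricted})=\bigcup_i H_i$, which is what you need. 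The underlying reason the first inclusion goes through is that any $\gamma\in G_i$ with $\gamma\neq\newcasearg$ must satisfy $\gamma_C\pleq\newcasecharac$ (otherwise $\newcasearg$ attacks $\gamma$ and $\gamma$ can never enter any $G_i$), so defenders drawn from $G_i$ already lie in the restricted set.
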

\begin{proof}
  {For $i = 1,2$, let $AF_i = \aaFtwo{D_i}{\newcasecharac}$ and the grounded extensions be $\groundext_i = \bigcup_{j \geqslant 0} G^i_j$.}
We will prove that $\forall j: G^1_j \subseteq G^2_{j+1}$ and $G^2_j \subseteq G^1_{j+1}$, and this allows us to prove that $\groundext_1 = \groundext_2$, which in turn implies the outcomes are the same.
Here we consider only $G^1_j \subseteq G^2_{j+1}$, as the other case is entirely symmetric.
  By induction on $j$:

  \begin{itemize}
  \item For the base case $j = 0$:

    If $G^1_0 \subseteq G^2_{0}$, we are done, since we always have that $G^i_j \subseteq G^i_{j+1}$. If not, there is a $\casei \in G^1_0 \setminus G^2_0$.
  Since $\casei \in G^1_0$, it is relevant to $\newcasecharac$, and thus $\casei \pleq \newcasecharac$, which in turn implies that $\casei \in D_2$, since ${D_1}_\newcasecharac = {D_2}_\newcasecharac$.
  
  On the other hand, as $\casei \not\in G^2_0$, there is a case $\caseii \in AF_2$ such that $\caseii \attacks \casei$. However, $\casei \not \in AF_1$, otherwise $\casei$ would be attacked in $AF_1$ and thus not in $G^1_0$. But then, since ${D_1}_\newcasecharac = {D_2}_\newcasecharac$, this means that $\caseii \not \pleq \newcasecharac$. Finally, this means that $\newcasearg \attacks \caseii$, and thus $G^2_0$ defends it. Therefore, $\caseii \in G^2_1$, what we wanted to prove.

  \item For the induction step, from $j$ to $j+1$:

  Again, if $G^1_{j+1} \!\subseteq\! G^2_{j+1}$, we are done. If not, there is a $\casei \in G^1_{j+1} \setminus G^2_{j+1}$. Again we can check that this implies that $\casei \in D_2$. Now, since $\casei \in G^1_{j+1}$, then $G^1_{j}$ defends it. But now, by inductive hypothesis, $G^1_{j} \subseteq G^2_{j+1}$. Therefore, $G^2_{j+1}$ also defends $\casei$, which implies that $\casei \in G^2_{j+2}$,as we wanted.\footnote{In abstract argumentation it can be verified that, if $E\subseteq \Args$ defends an argument $\caseiii$, and $E \subseteq E'$, then $E'$ also defends $\caseiii$.} This concludes the induction.
\end{itemize}

  To conclude, we can now see that $\groundext_1 = \groundext_2$, since, once more without loss of generality, if we consider $\casei \in \groundext_1$, by definition of $\groundext_1$ there is a $j$ such that $\casei \in G^1_j$. But since $G^1_j \subseteq G^2_{j+1}$, $\casei \in \groundext_2$. This proves that $\groundext_1 \subseteq \groundext_2$. The converse can be proven analogously. \qedhere
\end{proof}

\subsubsection{Uniqueness of concise subsets.}
We are now ready to prove Theorem \ref*{theo:unique-concise}.
  \newcommand{\mycase}{\ensuremath{(x,y)}}
  \newcommand{\opposite}{\ensuremath{(x,\bar{y})}}

\begin{proof}[Theorem \ref*{theo:unique-concise}]
  By contradiction, let $D'$ and $D''$ be distinct concise subsets of $D$.
  Let then $(x,y) \in (D' \setminus D'') \cup (D'' \setminus D')$ (the symmetric difference between $D'$ and $D''$) such that $(x,y)$ is {$\pleq$-}minimal in this set. It exists since the sets are different (there is at least one element in one of the sets and not in the other) and this set is finite. Then we have that:
  \begin{align*} \label{eq:smaller-equal}
    &\{(x',y') \in D' \mid (x',y') \pl (x,y) \} =& \\
    &\{(x',y') \in D'' \mid (x',y') \pl (x,y) \},&
  \end{align*}
  otherwise $(x,y)$ would not be minimal in the symmetrical difference.

  Without loss of generality, consider $(x,y) \in D'$ (and thus $(x,y) \not \in D''$). Let $\bar{y}$ be the opposite outcome to $y$, that is, $\bar{y} \in Y$ and $\bar{y} \neq y$. It is straightforward to check that $\opposite{} \not \in D'$, otherwise $D'$ would not be concise, since one of \mycase{} and \opposite{} would not be {\includable}.
  
  If we have that $\opposite{} \not \in D''$, then with the equality above we conclude that
  \begin{align*}
    &\{\caseii \in D' \setminus \{\mycase\} \mid \charac{\caseii} \pleq x \} = \{\caseii \in D'' \setminus \{\mycase\} \mid \charac{\caseii} \pleq x\}\\ &\text{and}\\
    &\{\caseii \in D' \cup \{\mycase\} \mid \charac{\caseii} \pleq x \} = \{\caseii \in D'' \cup \{\mycase\} \mid \charac{\caseii} \pleq x \}\text{.}
  \end{align*}
Thus, \mycase{} is {\includable} to $D'$ if and only if it is {\includable} to $D''$, which contradicts $D''$ being concise, since $D''$ would be lacking a {\includable} case, as $\mycase \not \in D''$.
  
  Now let us consider the situation in which $\opposite{} \in D''$. Suppose that $y = \defoutcome$, that is, it is the default outcome. We can see that $D' \setminus \{\opposite\} = D'$ and $D' \not \inferspaacbr \opposite$. Still, $D' \cup \{\mycase\} \inferspaacbr \opposite$, since it would create an in\coherence, making the outcome be non-default. Therefore $\opposite$ is {\includable} to $D'$, yet not a member of it, contradicting $D'$ being concise. The case for $\bar{y} = \defoutcome$ instead is analogous, checking that $\mycase$ is {\includable} to $D''$ yet not a member of it.
  Therefore, for any possibility, the assumption of more than one concise subset of $D$ leads to an absurdity, and thus we conclude there is at most one concise subset.
  \qedhere \end{proof}

\subsection{Proof of Theorem \ref*{theo:existence-concise}}
Again, we will first need a lemma.

\subsubsection{Addition of new cases.}
\label{sec:org3312bc4}
{The next result characterises the set of past cases/arguments attacked  when the dataset is extended with a new labelled case/argument. In particular,
this result compares the effect of predicting the outcome of some $N_2$
from $D$ alone and from $D$ extended with $\case{N_1}{o_1}$, when there is no case in $D$ with characterisation $N_1$ already and moreover $D$ is \coherent.}

{This result is interesting in its own right as it shows that, any argument attacked by the ``newly added'' case $\case{N_1}{o_1}$ is easily identified in the sets $G_0$ and $G_1$ in the grounded extension $\groundext$, being sufficient to check those rather than the entire casebase $D$. Notice that we require $D$ to be \coherent.}

\begin{lemma}
\label{lemma:attacked-entering-case}
Let $D$ be \coherent, $N_1, N_2 \in X$, $o_1 \in Y$, and suppose that there is no case in $D$ with characterisation $N_1$. Consider $AF_1 = \aaFtwo{D}{N_1}$ and $AF_2 = 
	\aaFtwo{D \cup \{\case{N_1}{o_1}\}}{N_2}$. Finally, let $\groundext(AF_1) $and $\groundext(AF_{2})$ be the respective grounded extensions.
Let $\caseii \in D$ be such that $\case{N_1}{o_1} \attacks \caseii$ in $AF_{2}$. Then, 
	\begin{enumerate}
        \item 	for every {$\caseiii$ that attacks}  $\caseii$ in $AF_1$,
          $N_1 \not\sim \caseiii$ (that is,  $\caseiii$ is irrelevant to $N_1$ {and, by regularity, $N_1 \not\pgeq \caseiii$)};
		\item in $AF_1$, $\case{N_1}{?}$ defends $\caseii$; 
		\item $\caseii \in \groundext(AF_1)$ and,  for $\groundext(AF_1)=\bigcup_{i \geqslant 0} G_i$, $\caseii$ is either in $G_0$ (that it, it is unattacked), or in $G_1$.
                \item {For every $\casev = \fullcase{\casev} \in D$ such that $\case{N_1}{?}$ defends ${\casev}$ in $AF_1$, if $\outcome{\casev} \neq o_1$, then, in $AF_2$, $\case{N_1}{o_1} \attacks \casev$.}
	\end{enumerate}
\end{lemma}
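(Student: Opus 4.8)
The proof is a careful unpacking of the attack relation of Definition~\ref{def:dear-aacbr-def} under the \wellbehaved{} conditions of Definition~\ref{def:wellbehav} (so $x_1 \not\sim x_2$ iff $x_1 \npgeq x_2$, and $\defcharac$ is the $\pleq$-least element of $X$). Two facts are used throughout: (i) since $\case{N_1}{o_1} \attacks \caseii$ in $AF_2$ (with $\case{N_1}{o_1}$ now a past case there) we have $o_1 \neq \outcome{\caseii}$, $N_1 \pgeq \charac{\caseii}$, and — as $\caseii \in D$ and no case of $D$ has characterisation $N_1$ — in fact $N_1 \pg \charac{\caseii}$, together with the ``no-intermediate'' clause that there is no $(\charac{\caseiii},\outcome{\caseiii}) \in D \cup \{\case{N_1}{o_1},\defcase\}$ with $N_1 \pg \charac{\caseiii} \pg \charac{\caseii}$ and $\outcome{\caseiii} = o_1$; and (ii) in a mined AF no attack points at the new case, so the new case lies in $G_0$ of the grounded extension and does not attack itself. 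For \textbf{Part 1}, let $\caseiii$ attack $\caseii$ in $AF_1$; it is not the new case $\case{N_1}{?}$ (that needs $N_1 \npgeq \charac{\caseii}$, against $N_1 \pg \charac{\caseii}$), so $\caseiii \in D \cup \{\defcase\}$ with $\outcome{\caseiii} \neq \outcome{\caseii}$, hence $\outcome{\caseiii} = o_1$. Also $\caseiii \neq \defcase$: $\defcharac \pgeq \charac{\caseii}$ and $\defcharac$ least force $\defcharac = \charac{\caseii}$, so $\defcase,\caseii$ would share a characterisation with different outcomes, against coherence. Thus $\caseiii \in D$. If $N_1 \pgeq \charac{\caseiii}$, then $\charac{\caseiii} \neq N_1$ (freshness) and $\charac{\caseiii} \neq \charac{\caseii}$ (coherence of $D$, as $\outcome{\caseiii} \neq \outcome{\caseii}$), so $N_1 \pg \charac{\caseiii} \pg \charac{\caseii}$ with $\outcome{\caseiii} = o_1$, contradicting the no-intermediate clause in $AF_2$. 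Hence $N_1 \npgeq \charac{\caseiii}$, i.e.\ $N_1 \not\sim \caseiii$.

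For \textbf{Part 2}, Part~1 says every attacker of $\caseii$ in $AF_1$ is a past-type argument irrelevant to $N_1$, hence attacked by $\case{N_1}{?}$; so $\case{N_1}{?}$ defends $\caseii$. For \textbf{Part 3}, $\case{N_1}{?}$ is unattacked, so $\case{N_1}{?} \in G_0$; by Part~2 this member of $G_0$ attacks every attacker of $\caseii$ (and $\case{N_1}{?}$ does not itself attack $\caseii$, since $N_1 \pg \charac{\caseii}$), so $G_0$ defends $\caseii$; therefore $\caseii \in G_0$ if $\caseii$ is unattacked and $\caseii \in G_1$ otherwise, and in both cases $\caseii \in \groundext(AF_1)$.

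For \textbf{Part 4}, let $\casev = \fullcase{\casev} \in D$ with $\outcome{\casev} \neq o_1$ be defended by $\case{N_1}{?}$ in $AF_1$. First, $N_1 \pgeq \charac{\casev}$: otherwise $\case{N_1}{?} \attacks \casev$ in $AF_1$, so $\case{N_1}{?}$ is an attacker of $\casev$, and since it does not attack itself it cannot defend $\casev$, a contradiction. As $\charac{\casev} \neq N_1$ this gives $N_1 \pg \charac{\casev}$, and with $o_1 \neq \outcome{\casev}$ the first two attack conditions of $\case{N_1}{o_1} \attacks \casev$ in $AF_2$ hold. For the no-intermediate clause, suppose some $(\charac{\caseiv},\outcome{\caseiv}) \in D \cup \{\case{N_1}{o_1},\defcase\}$ has $N_1 \pg \charac{\caseiv} \pg \charac{\casev}$ and $\outcome{\caseiv} = o_1$; then $\caseiv \neq \case{N_1}{o_1}$ (as $\charac{\caseiv} \neq N_1$) and $\caseiv \neq \defcase$ (as $\defcharac$ least means $\defcharac \pg \charac{\casev}$ cannot hold), so $\caseiv \in D$. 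Among all such witnesses pick $\caseiv^{*}$ with $\pleq$-minimal characterisation; a routine check using minimality shows $\caseiv^{*}$ meets the no-intermediate clause within $AF_1$, so $\caseiv^{*} \attacks \casev$ in $AF_1$. But then $\case{N_1}{?}$ must attack $\caseiv^{*}$, i.e.\ $N_1 \npgeq \charac{\caseiv^{*}}$, contradicting $N_1 \pg \charac{\caseiv^{*}}$. So no such $\caseiv$ exists and $\case{N_1}{o_1} \attacks \casev$ in $AF_2$.

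The delicate bookkeeping is around the third attack condition, whose ``intermediate'' case ranges over $D \cup \{\defcase\}$ in $AF_1$ but over $D \cup \{\case{N_1}{o_1},\defcase\}$ in $AF_2$, together with the need to dispatch separately the degenerate behaviour of $\defcase$ (via coherence, treating $\defcase$ alongside $D$) and of $N_1$ (via its freshness in $D$). The only genuinely non-routine step is Part~4's ``closest intermediate'' minimality argument, combined with the observation that $N_1 \npgeq \charac{\casev}$ would force the new case to attack $\casev$ and hence preclude its being defended.
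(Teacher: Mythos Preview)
Your argument is correct and mirrors the paper's proof: Parts 1--3 proceed by the same unpacking of the attack from $\case{N_1}{o_1}$ to $\caseii$ (you are simply more explicit than the paper in ruling out $\case{N_1}{?}$ and $\defcase$ as attackers of $\caseii$), and in Part 4 you spell out the minimal-intermediate argument that the paper compresses into a single line. The only small wrinkle is that your dismissal of $\caseiii=\defcase$ in Part 1 invokes coherence of $D$, but $\defcase\notin D$, so a case $(\defcharac,\nondefoutcome)\in D$ would not literally violate coherence; the paper's proof likewise does not isolate this boundary case.
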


\begin{proof} 
  \begin{enumerate}
  \item 
    Let $\caseii=\fullcase{\caseii}$. From the definition of attack:
    (i) $N_1 \pg \charac{\caseii} $ (this is strict due to \coherence{} and since no case in $D$ has characterisation $N_1$),
    (ii) $o_1 \neq \outcome{\caseii}$, and 
    (iii) there is no $(\charac\casei, \outcome{x})$ such that $\outcome{x} = o_1$ and $N_1 \pg \charac\casei \pg \charac{\caseii}$. \label{conciseness-req}
    Consider $\caseiv = (\charac{\caseiv}, \outcome{\caseiv})$ such that $\caseiv$ attacks $\caseii$ in $AF_1$ (if there is no such $\caseiv$ then the result trivially holds).
		  {Assume by contradiction that} $\caseiv$ is relevant to $N_1$. Then {by regularity} $N_1 \pgeq \charac{\caseiv}$. But since $D$ is {\coherent} and $\case{N_1}{o_1}\not \in D$, $\caseiv$ and $N_1$ are distinct, and thus $N_1 \pg \charac{\caseiv}$. As $\caseiv$ attacks $\caseii$, $\outcome{\caseiv} \neq \outcome{\caseii}$, but this in turn implies that $\outcome{\caseiv} = o_1$, since $\case{N_1}{o_1}$ also attacks $\caseii$, in $AF_{2}$. But then $N_1 \pg \charac{\caseiv} \pg \charac\caseii$, with $\outcome{\caseiv} = o_1$. This contradicts {\concision} of the attack between $(N_1, o_1)$ and $\caseii$. Therefore, $\caseiv$ is not relevant to $N_1$, as we wanted to prove.
  \item {Trivially true, by} 1 {(as,} if $\caseiv$ is an attacker $\caseii$, then $N_1 \not\sim \caseiv$; but then $\case{N_1}{?} \attacks \caseiv$).
  \item Trivially true, by 2.
    \item {Since, in $AF_1$, $\case{N_1}{?}$ defends ${\casev}$, then any attacker $\caseiv$ of $\casev$ is irrelevant to $N_1$, and by regularity, $N_1 \not\pgeq \caseiv$. Thus {\concision} is satisfied. Requirement 1 is the hypothesis and requirement 2 is satisfied since $\case{N_1}{?}$ defends ${\casev}$ in $AF_1$.} \qedhere
  \end{enumerate}
\end{proof}

Now, we can show the theorem. We will first prove that Algorithm \ref*{algo:simpleadd} is correct.
\begin{theorem}[Correctness of Algorithm \ref*{algo:simpleadd}] \label{theo:correctness-add}
  Every execution of $simple\_add((Args,$ $\attacks), next\_case)$ (Algorithm \ref*{algo:simpleadd}) in Algorithm \ref*{algo:cumulaacbr} correctly returns  $\aaFone{Args \cup \{next\_case\}}$.
\end{theorem}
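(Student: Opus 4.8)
The plan is to track the state of Algorithm~\ref{algo:cumulaacbr} at the moment $simple\_add$ is invoked, and then to compare, attack by attack, the AF it returns with the AF mined from the enlarged casebase. Fix one invocation $simple\_add((\Args,\attacks),n)$ with $n=\fullcase{n}$, and set $D_0=\Args\setminus\{\defcase\}$. I would show, by a simultaneous induction on the successive $simple\_add$ calls of Algorithm~\ref{algo:cumulaacbr} (base case $(\{\defcase\},\emptyset)=\aaFone{\emptyset}$), that at the point of each such call: (i) $(\Args,\attacks)=\aaFone{D_0}$; (ii) $D_0$ is \coherent{} --- any two cases of $D$ sharing a characterisation lie in one stratum, within which single-valuedness of the predicted outcome lets at most one of them be \surprising{} and enter $to\_add$ --- and, as is standing in \wellbehaved{} setups, $\defcharac$ is not the characterisation of any case, so $D_0\cup\{\defcase\}$ is \coherent{} too; and (iii) no argument $\alpha$ of $(\Args,\attacks)$ has $\charac{\alpha}\pgeq\charac{n}$. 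Claim (iii) is the reason $simple\_add$ can be so short, and I expect it to be the crux: every argument already present was added either in a stratum strictly earlier than $n$'s --- so $n$ was still unprocessed and $\pleq$-minimality of that stratum forces $\charac{n}\not\pl\charac{\alpha}$ --- or in $n$'s own stratum, where $\alpha$ and $n$ are both $\pleq$-minimal, so again $\charac{\alpha}\not\pg\charac{n}$; combined with $\charac{\alpha}\neq\charac{n}$ (by (ii) for $\alpha\in D_0$, and $\defcharac\neq\charac{n}$ for $\alpha=\defcase$) and antisymmetry of $\pleq$ on $X$, this yields $\charac{\alpha}\not\pgeq\charac{n}$.

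Granting (i)--(iii), the argument sets of $simple\_add((\Args,\attacks),n)$ and of $\aaFone{D_0\cup\{n\}}$ are both $\Args\cup\{n\}$, so it remains to match the attack relations. I would first argue that $\aaFone{D_0\cup\{n\}}$ differs from $(\Args,\attacks)=\aaFone{D_0}$ only by the argument $n$ and by some attacks \emph{emanating from} $n$. Conditions~1--2 of the attack definition (Definition~\ref{def:dear-aacbr-def}) do not refer to the dataset, and condition~3 only becomes harder to meet when the dataset grows; hence no old attack is destroyed --- that would require $n$ to be a new blocking $\gamma$ with $\charac{\alpha}\pg\charac{n}\pg\charac{\beta}$, impossible by (iii) --- and no new attack between two old arguments appears. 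No attack points \emph{into} $n$ either: an attacker $\alpha$ of $n$ would need $\charac{\alpha}\pgeq\charac{n}$, which (iii) forbids; and $n$ does not attack itself (equal outcomes fail). Thus the attack relation of $\aaFone{D_0\cup\{n\}}$ is exactly $\attacks\cup\{(n,\beta)\mid\beta\in D_0\cup\{\defcase\},\ n\attacks\beta\ \text{in}\ \aaFone{D_0\cup\{n\}}\}$.

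It then suffices to identify $\{\beta\mid n\attacks\beta\ \text{in}\ \aaFone{D_0\cup\{n\}}\}$ with the set $\{\beta\in DEF\mid\outcome{\beta}\neq\outcome{n}\}$ that Algorithm~\ref{algo:simpleadd} uses. Note $\aaFtwo{\Args}{\charac{n}}=\aaFtwo{D_0}{\charac{n}}$ --- call it $AF_1$ --- and, since attacks among dataset and default arguments never mention the attached new case, $n\attacks\beta$ in $\aaFone{D_0\cup\{n\}}$ iff $\fullcase{n}\attacks\beta$ in $\aaFtwo{D_0\cup\{\fullcase{n}\}}{N_2}$ for an arbitrary $N_2$. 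I would then apply Lemma~\ref{lemma:attacked-entering-case} with $N_1=\charac{n}$, $o_1=\outcome{n}$, $D=D_0$ (its hypotheses are exactly (ii) together with the consequence of (iii) that no case of $D_0$ has characterisation $\charac{n}$): for the forward inclusion, $n\attacks\beta$ gives, by parts~1--2, that $\newcasearg[n]$ defends $\beta$ in $AF_1$, while attack condition~1 gives $\outcome{\beta}\neq\outcome{n}$ and (iii) gives $\beta\neq\newcasearg[n]$, so $\beta\in DEF$ and $\outcome{\beta}\neq\outcome{n}$; for the reverse inclusion, $\beta\in DEF$ with $\outcome{\beta}\neq\outcome{n}$ says $\newcasearg[n]$ defends $\beta$ in $AF_1$, so part~4 gives $n\attacks\beta$. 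The subcase $\beta=\defcase$ is covered by the same argument, which goes through verbatim with $\defcase$ in the role of a less specific case since $\defcharac$ is the least element (cf.\ Definition~\ref{def:wellbehav}). Hence $\attacks'=\attacks\cup\{(n,\beta)\mid\beta\in DEF,\ \outcome{\beta}\neq\outcome{n}\}$ is precisely the attack relation of $\aaFone{D_0\cup\{n\}}$, so $simple\_add((\Args,\attacks),n)=\aaFone{D_0\cup\{n\}}=\aaFone{\Args\cup\{n\}}$, the last equality being the usual identification of a \wellbehaved{} casebase with its argument set (which already contains $\defcase$).

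The main obstacle I anticipate is not an isolated computation but getting (i)--(iii) straight --- in particular deriving (iii) carefully from the $\pleq$-stratified processing order, including the small but essential point that within a single stratum at most one case per characterisation is \surprising{}, which is what keeps the intermediate $D_0$ \coherent{} (even when the input $D$ is in\coherent{}) and thereby licenses the use of Lemma~\ref{lemma:attacked-entering-case}. A secondary point is that Lemma~\ref{lemma:attacked-entering-case} is stated for an $AF_2$ carrying an additional new case $N_2$; this is harmless here precisely because the attacks out of $n$ among dataset and default arguments are the same whether or not a further new case is attached.
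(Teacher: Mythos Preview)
Your proposal is correct and follows essentially the same approach as the paper: both proofs reduce the correctness of $simple\_add$ to Lemma~\ref{lemma:attacked-entering-case}, rely on the stratum-based processing order to guarantee that no argument already in $\Args$ has the same characterisation as (or is more specific than) $next\_case$, and observe that the concision condition together with the $\pleq$-order ensures no existing attack must be removed. Your version is a fully worked-out proof where the paper gives only a sketch; in particular, your explicit invariants (i)--(iii), your argument that the intermediate $D_0$ stays \coherent{} even when $D$ is not, and your handling of the superfluous new case $N_2$ in the statement of Lemma~\ref{lemma:attacked-entering-case} all fill in details the paper leaves implicit.
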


\begin{proof}[Proof sketch]
  This is essentially a consequence of Lemma \ref{lemma:attacked-entering-case}. We know that there will never be an argument in $\Args$ with the same characterisation as $next\_case$, since they will occur in the same stratum, thus the lemma applies. The lemma guarantees that Algorithm \ref*{algo:simpleadd} adds all attacks that need to be added {and only those}. 
  Finally, we need to check that it will never be necessary to remove an attack. This is true due to the {\concision}, and since arguments are added following the partial order.
Therefore the only modifications on the set of attacks are the ones in $simple\_add$. \qedhere
\end{proof}

\begin{proof}[Theorem \ref*{theo:existence-concise}]
  \begin{enumerate}
  \item Every case in $D'$ is surprising {\wrt} $D'$.
    
  \newcommand{\partialD}{\ensuremath\hat{D'}}
  Algorithm \ref*{algo:cumulaacbr} explicitly only adds a case if it is {\surprising}. Since {\coherence} is maintained at every step of the algorithm and it proceeds by following the partial order, one can see that after a case $\fullcase{\casei}$ is added, the outcome predicted for a new case $\newcasearg[\casei]$ is always $\outcome{\casei}$, by Lemma \ref{lemma:lesser}. Thus every case in $D'$ is surprising {\wrt} $D'$.

\item If $D$ has a concise subset, it is found.
  
    Now, suppose that $D$ has a concise subset $D''$. We already know by Theorem \ref*{theo:unique-concise} that it is the unique concise subset. Now we need to show that $D' = D''$. We will show by contradiction, in a similar fashion to the proof of Theorem \ref*{theo:unique-concise}:
  
  Let then $\mycase \in (D' \setminus D'') \cup (D'' \setminus D')$ (the symmetric difference between $D'$ and $D''$) such that $\mycase$ is {$\pleq$-}minimal in this set. It exists since the sets are different (there is at least one element in one of the sets and not in the other) and this set is finite.
  Then we have that:
  \begin{align*} 
    &\{(x',y') \in D' \mid (x',y') \pl \mycase \} =& \\
    &\{(x',y') \in D'' \mid (x',y') \pl \mycase \}&
  \end{align*}
  otherwise $\mycase$ would not be minimal in the symmetrical difference.

  However, every concise set is \coherent. Besides, at every step, the currently selected subset of $D$ is also \coherent. Therefore in the equality above we can replace $\pl$ with $\pleq$, and by applying Lemma \ref{lemma:lesser} we will conclude that either $\mycase$ is {\includable} {\wrt} both $D'$ and $D''$ or to neither. Since $D''$ is concise, it must be {\includable} {\wrt} both, and thus $\mycase \in D''$.

  Now consider the moment when $\mycase$ was considered in Algorithm \ref*{algo:cumulaacbr}. At that point, every case in $\{(x',y') \in D' \mid (x',y') \pl \mycase \}$ was already in $\Args$, since Algorithm \ref*{algo:cumulaacbr} follows the partial order. Thus it was considered to be {\includable} if and only if it is {\includable} {\wrt} $D'$. However, we already know that it is {\includable} {\wrt} $D'$. Thus it was considered to be {\includable} at that point. Thus, it was added, and then $\mycase in D'$. This is absurd, since we defined $\mycase$ as an element of $\mycase \in (D' \setminus D'') \cup (D'' \setminus D')$.

\item A {\coherent} dataset has a concise subset
  
  We will show this by showing that, if the input dataset is \coherent, then the dataset underpinning the AF resulting from Algorithm \ref*{algo:cumulaacbr} is concise.

In order to prove that, for the returned $\Args$,  $\Args\setminus\{\defcase\}$ is concise, we just need to prove that at the end of each loop $\Args\setminus\{\defcase\}$ is concise \wrt\ the set of all seen examples. 
  
  As  the base case, before the loop is entered, this is clearly the case, as the only seen argument is the default.

  As the induction step, we know that every case previously added is still {\includable}, since the new cases added are not less specific than them according to the partial order, and thus by Lemma \ref{lemma:lesser} their prediction is not changed, that is, they keep being {\includable}. The same is true for every case previously not added: adding more cases afterwards does not change their prediction. For the cases added at this new iteration, by definition the {\includable} ones are added and the not {\includable} ones are not.
Regarding the order in which cases of the same stratum are added, each of the {\includable} cases will be included and the not {\includable} ones will not be. It can be seen that the order is irrelevant as, since they are all $\pleq$-minimal and the dataset is {\coherent}, they are incomparable, so each case in the list is irrelevant with respect to the other.
Thus, for every case seen until this point, it is in the AF iff it is {\includable}. As this is true for every iteration, it is true for the final, returned AF. \qedhere
  \end{enumerate}
\end{proof}

\subsection{\Coherence{} in \caacbr{} - Proof of Theorem \ref*{theo:caacbr-coherent}}
\newcommand{\ci}{\case{\charac{\casei}}{\defoutcome}}
\newcommand{\cii}{\case{\charac{\casei}}{\nondefoutcome}}
\begin{proof}
  We will $D$ is in\coherent{}, and thus there are at least two cases with the same characterisation and different outcomes, namely, $\ci$ and $\cii$. Then during the execution of Algorithm \ref*{algo:cumulaacbr}, $\ci$ and $\cii$ occur in the same stratum, that is, they are in \texttt{stratum} in the same \texttt{while} loop. This is clear since they share the same characterisation, and $\ci$ is $\pleq\!\!\text{-minimal}$ if and only if $\cii$ is. At this point, there is a current AF, and the predicted outcome for $\charac{\casei}$ is either $\defoutcome$ or $\nondefoutcome$. If the former, $\cii$ is added to \texttt{to\_add}, while $\ci$ is not, and if the latter, the opposite happens. Thus only one of them is indeed added to the AF, and thus the dataset underpinning the AF when Algorithm \ref*{algo:cumulaacbr} terminates is \coherent.
\end{proof}

\subsection{Spikes - Proof of Theorems \ref*{theo:remove-spike} and \ref*{theo:no-spikes}}
\subsubsection{Nearest cases.}
Before proving those theorems, an important property of the predictions of $\paacbr$ in relation to the  ``most similar'' (or \emph{nearest}) cases to the new case needs to be proved. \gppnew{This result is already presented in our previous (non-archival) work, but we repeat it for convenience of the reader \cite{caacbr}}.
The result is that when these nearest cases all agree on an outcome, the prediction is necessarily this outcome. It generalises \cite[Proposition 2]{DBLP:conf/kr/CyrasST16} {in two ways:} by considering the entire set of nearest cases, instead of requiring a unique nearest case, for $\paacbr$, instead of its instance \oaacbr. As in \cite{DBLP:conf/kr/CyrasST16}, we prove this property for \coherent\ casebases.

We first define the notion of nearest case.

\begin{definition}
A case $\fullcase{\casei} \in D$ is \defemph{nearest to $\newcasecharac$} iff $\charac\casei \pleq \newcasecharac$ and 
	it is maximally so, that is, 
	there is no $\fullcase\caseii\in D$ such that $\charac\casei \pl \charac\caseii \pleq \newcasecharac$.
\end{definition}

  \begin{theorem} \label{theo:nearest_neighbours}
If {$D$ is {\coherent} and} every nearest case to $\newcasecharac$ is of the form $\case{\charac\casei}{o}$ {for some outcome $o\in Y$} (that is, all nearest  cases to the new case agree on the same outcome), then $\paacbr(D,\newcasecharac)=o$ (that is, the outcome for $\newcasecharac$ is $o$).
\end{theorem}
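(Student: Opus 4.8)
The plan is to reduce, via Lemma~\ref{lemma:lesser}, to the acyclic AF on $D_{\newcasecharac}\cup\{\defcase\}$ and then to establish, by a downward induction on specificity, a uniform description of which arguments sit in its grounded extension.

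Write $D_{\newcasecharac}=\{\casei\in D\mid\casei\pleq\newcasecharac\}$. By Lemma~\ref{lemma:lesser}, $\paacbr(D,\newcasecharac)=\paacbr(D_{\newcasecharac},\newcasecharac)$, and a short check shows that passing from $D$ to $D_{\newcasecharac}$ does not change the set of cases nearest to $\newcasecharac$ (a nearest case is already $\pleq$-below $\newcasecharac$); so I may assume $D=D_{\newcasecharac}$. Then every $\casei\in D$ has $\charac\casei\pleq\newcasecharac$, and also $\defcharac\pleq\newcasecharac$, so $\newcasearg$ attacks nothing and is attacked by nothing: it is irrelevant to whether $\defcase$ is in the grounded extension, and I can work inside the AF on $D\cup\{\defcase\}$, which is acyclic because $D$ is \coherent{} (the remark after Definition~\ref{def:dear-aacbr-def}). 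Two book-keeping facts will be used: (a) the cases nearest to $\newcasecharac$ are exactly the $\pleq$-maximal elements of $D$; and (b) because $D$ is \coherent{} and there are only two outcomes, the attackers of any argument $\casei\in D\cup\{\defcase\}$ are precisely the $\pleq$-minimal elements of $\{\gamma\in D\mid\charac\gamma\pg\charac\casei,\ \outcome\gamma\neq\outcome\casei\}$ --- in particular every attacker $\caseii$ of $\casei$ has $\charac\caseii\pg\charac\casei$. I also use the standard fact that in a finite acyclic AF an argument lies in the grounded extension $\groundext$ iff none of its attackers does.

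The core is the following pair of statements, proved simultaneously by well-founded downward induction on $\charac\casei$ in the finite poset of characterisations occurring in $D\cup\{\defcase\}$. \emph{Part 1:} if every case nearest to $\newcasecharac$ with characterisation $\pgeq\charac\casei$ has outcome $\outcome\casei$, then $\casei\in\groundext$. \emph{Part 2:} if every such nearest case has outcome different from $\outcome\casei$, then $\casei\notin\groundext$. (I assume there is at least one case nearest to $\newcasecharac$; if not, the hypothesis of the theorem holds vacuously and the outcome is $\defoutcome$ by default.) The base case $\charac\casei$ $\pleq$-maximal is immediate, since then $\casei$ is unattacked. For the inductive step of Part 1: each attacker $\caseii$ of $\casei$ has $\charac\caseii\pg\charac\casei$ and $\outcome\caseii\neq\outcome\casei$, hence every nearest case above $\charac\caseii$ is above $\charac\casei$ and so has outcome $\outcome\casei\neq\outcome\caseii$; by the induction hypothesis (Part 2 applied to $\caseii$) $\caseii\notin\groundext$; since this holds for all attackers, $\casei\in\groundext$. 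For Part 2: the set $A=\{\gamma\in D\mid\charac\gamma\pg\charac\casei,\ \outcome\gamma\neq\outcome\casei\}$ is nonempty (any nearest case above $\charac\casei$ lies in it, its characterisation being strictly above $\charac\casei$ by \coherence), so pick $\caseii$ a $\pleq$-minimal element of $A$; by (b), $\caseii$ attacks $\casei$; every nearest case above $\charac\caseii$ is above $\charac\casei$, hence has outcome different from $\outcome\casei$, i.e.\ equal to $\outcome\caseii$; by the induction hypothesis (Part 1 applied to $\caseii$) $\caseii\in\groundext$, so $\casei$ has an attacker in $\groundext$ and $\casei\notin\groundext$.

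Applying the two parts to $\casei=\defcase$ finishes the proof: $\defcharac$ is below every characterisation, so ``nearest cases with characterisation $\pgeq\defcharac$'' is just the set of all nearest cases; if they all have outcome $o$, then Part 1 (when $o=\defoutcome$) yields $\defcase\in\groundext$ and hence outcome $\defoutcome=o$, and Part 2 (when $o=\nondefoutcome$) yields $\defcase\notin\groundext$ and hence outcome $\nondefoutcome=o$. The step I expect to be the crux is Part 2 of the induction: a single biconditional ``$\casei\in\groundext$ iff all nearest cases above $\charac\casei$ agree with $\outcome\casei$'' does not sustain its own induction, since eliminating a given attacker from $\groundext$ only needs \emph{some} nearest case above it that disagrees, whereas invoking the hypothesis at the attacker needs control of \emph{all} of them; splitting into the two one-directional halves, and choosing the attacker in Part 2 to be $\pleq$-minimal (so that every nearest case above it is also above $\charac\casei$), is precisely what makes it work. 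The only remaining loose ends are the routine edge cases --- \coherence{} at $\defcharac$ and $D_{\newcasecharac}=\emptyset$ --- handled as indicated above.
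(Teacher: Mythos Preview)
Your argument is correct and takes a genuinely different route from the paper's. The paper works in the full AF $\aaDN$ and inducts on the stages $G_i$ of the grounded extension, proving that every argument in $\groundext$ other than $\newcasearg$ carries outcome $o$; the case $o=\nondefoutcome$ then follows immediately, while the case $o=\defoutcome$ is handled separately by invoking stability of $\groundext$ (well-foundedness) to show $\defcase$ is defended. You instead first strip away the irrelevant part of $D$ via Lemma~\ref{lemma:lesser} so that $\newcasearg$ becomes inert, and then run a downward induction on the poset of characterisations, proving your Parts~1 and~2 simultaneously. Both approaches ultimately lean on well-foundedness (you via the ``standard fact'' that in an acyclic AF membership in $\groundext$ is equivalent to no attacker being in $\groundext$; the paper via stability), but yours yields a more local, structural picture of which arguments are in or out of $\groundext$ in terms of the nearest cases sitting above them, and it treats both outcomes uniformly rather than splitting into two asymmetrical cases.

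One small remark on your closing commentary: the reason for choosing $\caseii$ to be $\pleq$-minimal in $A$ in Part~2 is so that $\caseii$ actually attacks $\casei$ (your book-keeping fact~(b)); the property that every nearest case above $\charac\caseii$ is also above $\charac\casei$ follows from $\charac\caseii\pg\charac\casei$ alone and holds for any element of $A$, not just minimal ones. This does not affect correctness, only the explanation of ``what makes it work.''
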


  \begin{proof} Let $\groundext$ be the grounded extension of \aaDN. An outline of the proof is as follows:
	  
\begin{enumerate}
\item \label{nn:ind} We will first prove that each argument in $\groundext$ is either \(\newcasearg\) or of the form \(\case{\charac\caseii}{o}\) (that is, agreeing in outcome with all nearest cases).

\item Then we will prove that if \(o = \nondefoutcome\) (that is, \(o\) is the non-default outcome), then \(\defcase\not\in\groundext\) {(and thus $\paacbr(D,\newcasecharac)=\nondefoutcome$,
	as envisaged by the theorem)}.

\item \label{nn:def} Finally, by using the fact that $\aaDN$ is well-founded {(given that $D$ is \coherent)},
and thus $\groundext$ is also stable, we will prove that if \(o = \defoutcome\) (that is, \(o\) is the default outcome), then \(\defcase\in \groundext\) {(and thus $\paacbr(D,\newcasecharac)=\defoutcome$, as envisaged by the theorem)}.
\end{enumerate}
We will now prove 1-3.

\begin{enumerate}
	\item By definition $\groundext = \bigcup_{i \geqslant 0} G_i$. 
		We prove by induction that, for every $i$,  each argument in $G_i$ is either \(\newcasearg\) or of the form \(\case{\charac\caseii}{o}\).
		{Then, given that each element of  $\groundext$ belongs to some $G_i$, the property holds for $\groundext$}.

  \begin{enumerate}
	  \item For the base case, consider \(G_0\).  \(\newcasearg\) and all nearest cases are {unattacked, and thus in $G_0$} {(notice how this requires the AF to be regular, otherwise nearest cases could be irrelevant)}. \(G_0\) may however contain further unattacked cases. Let \(\caseii = \fullcase{\caseii}\) be such a case. If \(\newcasecharac \not \pgeq \charac{\caseii}\), then $\defcase \not\sim \caseii$ and thus \(\newcasearg\) attacks \(\caseii\), contradicting that \(\caseii\) in unattacked. So \(\charac{\caseii} \pleq \newcasecharac\). As \(\caseii\) is not a nearest case, there is a nearest case \(\casei = \fullcase{\casei}\) such that 
\(\charac{\caseii} \pl \charac{\casei}\). 
By contradiction, assume 
		  \(\outcome{\caseii} \neq o\).  Let \(\Gamma = \{\gamma \in \Args\ |\ \gamma = \fullcase{\gamma}\), \(\charac{\caseii} \pl \charac{\gamma} \pleq \charac{\casei}\) and \(\outcome{\gamma} = o \}\). Notice that \(\Gamma\) is non-empty, as \(\casei \in \Gamma\). $\Gamma$ is the set of ``potential attackers'' of \(\caseii\), but only {$\pleq$-minimal} arguments in $\Gamma$ do actually attack \(\caseii\).  Let \(\caseiv\) be such a {$\pleq$-minimal} element of \(\Gamma\).\footnote{Note that \(\caseiv\) is guaranteed to exist, as \(\Gamma\) is non-empty and otherwise we would be able to build {an arbitrarily long chain of (distinct) {arguments, decreasing} \wrt\ $\pl$. However this would allow a chain with more elements than the cardinality of \(\Gamma\), which is absurd.}}
By construction, \(\caseiv\) attacks \(\caseii\). Thus \(\caseii\) is attacked and not in \(G_0\), a contradiction.
Hence, \(\outcome{\caseii} = o\), as required.

\item For the inductive step, let us assume that the property holds for a generic \(G_i\), and let us prove it for \(G_{i+1}\).
Let \(\caseii= \fullcase{\caseii} \in G_{i+1} \setminus G_i\) (if \(\caseii \in G_i\), the property holds by the induction hypothesis).
\(\newcasearg\) does not attack \(\caseii\), as otherwise \(\caseii\) would not be defended by \(G_i\), as \(G_i\) is conflict-free. Thus, once again, as \(\caseii\) is not a nearest case, there is a nearest case \(\casei=\fullcase{\casei}\) such that \(\charac{\caseii} \pl \charac{\casei}\).
		  Again, assume that \(\outcome{\caseii} \neq o\). Then let \(\Gamma = \{\gamma \in \Args\ |\ \gamma = (\charac{\gamma}, \outcome{\gamma})\), \(\charac{\caseii} \pl \charac{\gamma} \pleq \charac{\casei}\) and \(\outcome{\gamma} = o \}\), with \(\caseiv\) a {$\pleq$-minimal element} of \(\Gamma\). 
		  Then \(\caseiv\) attacks \(\caseii\). However, as \(G_i\) defends \(\caseii\), there is then \(\casev \in G_i\) such that \(\casev\) attacks \(\caseiv\). 
By inductive hypothesis, \(\casev\) is either \(\newcasearg\) or \(\casev = (\charac{\casev}, o)\). The first option is not possible, as \(\caseiv \in \Gamma\), and thus \(\charac{\caseiv} \pleq \charac{\casei}\), and of course \(\charac{\casei} \pleq \newcasecharac\). Thus, \(\charac{\caseiv} \pleq \newcasecharac\) and is thus not attacked by $\newcasearg$. This means that \((\charac{\casev}, o)\) attacks \(\caseiv = (\charac{\caseiv}, \outcome{\caseiv})\). But this is absurd as well, as \(\caseiv \in \Gamma\) and thus \(\outcome{\caseiv} = o = \outcome{\casev}\).
Therefore, our assumption that \(\outcome{\caseii} \neq o\) was false, that is, \(\outcome{\caseii} = o\), as required.

\end{enumerate}

\item If \(o = \nondefoutcome\), the default argument \(\defcase\) is not in $\groundext$, since we have just proven that all arguments in $\groundext$ other than \newcasearg\ have outcome $o$.

\item If \(o = \defoutcome\), then let \(\caseii\) be an attacker of $\defcase$,  and thus of the form \(\caseii = \case{\charac\caseii}{\nondefoutcome}\) {(again see how regularity is necessary, since otherwise $\newcasearg$ could be the attacker)}. \(\caseii\) is not in $\groundext$ and, since $\groundext$ is also a stable extension, {some argument in} $\groundext$ attacks \(\caseii\). This is true for any attacker \(\caseii\) of the default argument, and thus the default argument is defended by $\groundext$. As $\groundext$ contains every argument it defends, the default argument is in the grounded extension, confirming that the outcome for $\newcasecharac$ is $\defoutcome$. \qedhere
\end{enumerate}
\end{proof}

\subsubsection{Spikes.} We are now ready to present a proof sketch of Theorem \ref*{theo:remove-spike} and a full proof of Theorem \ref*{theo:no-spikes}, the more interesting result.

\begin{proof}[Proof sketch of Theorem \ref*{theo:remove-spike}]
One may remember that the definition of the grounded extension is of the form $\groundext = \bigcup_{i \geqslant 0} G_i$ and verify by induction on $i$ that whether an argument is on the grounded extension or not is a function of whether the arguments that can reach it are or not in the grounded extension. This may be more clear using the method of labellings \cite{DBLP:books/sp/09/ModgilC09}. Thus whether $\defcase \in \groundext$ or not does not depend on a spike.
\end{proof}

\begin{proof}[Proof of Theorem \ref*{theo:no-spikes}]
    We will show by a contradiction. Let $\casei$ be a spike and assume $\casei \in \Args$. Then, during the execution of Algorithm \ref{algo:cumulaacbr}, at some moment $\casei$ was added to \texttt{to\_add}. Indeed, assume additionally that $\casei$ is the first such spike to be added.
    In order to this to happen, the predicted outcome at the moment for $\casei$ was not $\outcome{\casei}$. By (the contrapositive of) Theorem \ref{theo:nearest_neighbours}, this implies that there is a case $\caseii$ which is a nearest case to $\casei$ with the different outcome, that is, $\outcome{\caseii} \neq \outcome{\casei}$. It is straightforward to check that every nearest case of a new case is defended by it. Thus, by Lemma \ref{lemma:attacked-entering-case}, item 4, when $\casei$ is added to $\Args$, $\casei$ attacks $\caseii$.

    Now notice that, since Algorithm \ref{algo:cumulaacbr} goes by strata, and $\caseii$ is in the AF at this point of the algorithm, then it also is at the final AF, that is, $\caseii \in \Args$, as is the attack from $\casei$ to $\caseii$.
    However, since $\casei$ is a spike, it has no path to the default case, and thus $\caseii$ also has no path to the default case, that is, it is also a spike. This is absurd, since it contradicts our assumption that $\casei$ is the first spike to be added to $\Args$.
    Therefore, no spike is added to $\Args$.
\end{proof}

\section{Case study - Interpretation of factors}
\begin{table*}[htb]
  \centering
  \begin{tabular}{|c|p{0.90\textwidth}|}
    \hline
    Feature & Description \\ \hline
    $F^{1}_{\defendant}$ & plaintiff disclosed its product information in negotiations with defendant \\\hline
    $F^{2}_{\plaintiff}$ & defendant paid plaintiff's former employee to switch employment, apparently in an attempt to induce the employee to bring plaintiff's information \\\hline
    $F^{3}_{\defendant}$ & defendant's employee was the sole developer of plaintiff's product \\\hline
    $F^{4}_{\plaintiff}$ & defendant entered into a nondisclosure agreement with plaintiff \\\hline
    $F^{5}_{\defendant}$ & the nondisclosure agreement did not specify which information was to be treated as confidential \\\hline
    $F^{6}_{\plaintiff}$ & plaintiff took active measures to limit access to and distribution of its information \\\hline
    $F^{7}_{\plaintiff}$ & plaintiff's former employee brought product development information to defendant \\\hline
    $F^{8}_{\plaintiff}$ & defendant's access to plaintiff's product information saved it time or expense \\\hline
    $F^{10}_{\defendant}$ & plaintiff disclosed its product information to outsiders \\\hline
    $F^{11}_{\defendant}$ & plaintiff's information was about customers and suppliers (i.e. it may have been available independently from customers or even in directories) \\\hline
    $F^{12}_{\plaintiff}$ & plaintiff's disclosures to outsiders were subject to confidentiality restrictions \\\hline
    $F^{13}_{\plaintiff}$ & plaintiff and defendant entered into a noncompetition agreement \\\hline
    $F^{14}_{\plaintiff}$ & defendant used materials that were subject to confidentiality restrictions \\\hline
    $F^{15}_{\plaintiff}$ & plaintiff's information was unique in that plaintiff was the only manufacturer making the product \\\hline
    $F^{16}_{\defendant}$ & plaintiff's product information could be learned by reverse-engineering \\\hline
    $F^{17}_{\defendant}$ & defendant developed its product by independent research \\\hline
    $F^{18}_{\plaintiff}$ & defendant's product was identical to plaintiff's \\\hline
    $F^{19}_{\defendant}$ & plaintiff did not adopt any security measures \\\hline
    $F^{20}_{\defendant}$ & plaintiff's information was known to competitors \\\hline
    $F^{21}_{\plaintiff}$ & defendant obtained plaintiff's information altough he knew that plaintiff's information was confidential \\\hline
    $F^{22}_{\plaintiff}$ & defendant used invasive techniques to gain access to plaintiff's information \\\hline
    $F^{23}_{\defendant}$ & plaintiff entered into an agreement waiving confidentiality \\\hline
    $F^{24}_{\defendant}$ & the information could be obtained from publicly available sources \\\hline
    $F^{25}_{\defendant}$ & defendant discovered plaintiff's information through reverse engineering \\\hline
    $F^{26}_{\plaintiff}$ & defendant obtained plaintiff's information through deception \\\hline
    $F^{27}_{\defendant}$ & plaintiff disclosed its information in a public forum \\\hline
  \end{tabular}
  \caption{Meaning of factors.}
  \label{tab:case-study-interp}
\end{table*}

For ease of reference, we include here in Table \ref{tab:case-study-interp} the meaning of factors, as seen in \cite{Grabmair_thesis}.

\end{document}